\documentclass[12pt]{article}

\usepackage[nonatbib]{arxiv}
\usepackage[toc,page]{appendix}

\usepackage{lineno,hyperref}
\usepackage{floatrow}
\usepackage{caption}
\usepackage{subcaption}
\usepackage{color, soul}
\usepackage{amsmath,amsfonts,amssymb,amscd,amsthm,xspace}
\usepackage{graphicx} 
\usepackage{fancyhdr}
\renewcommand{\vec}[1]{\mathbf{#1}}
\newcommand{\sgn}{\mathop{\mathrm{sgn}}}
\usepackage{units}
\usepackage{mathtools}

\usepackage{multirow}
\usepackage{multicol}
\usepackage[normalem]{ulem}
\usepackage{cancel}
\newcommand{\ignore}[1]{}
\newcommand{\bigO}{\mathcal{O}}
\newtheorem{lemma}{Lemma}
\newcommand{\highlightAdd}{black}    
\newcommand{\highlightDel}[1]{} 

\title{Training Multi-layer Spiking Neural Networks using NormAD based Spatio-Temporal Error Backpropagation}

\author{
  Navin Anwani\thanks{Corresponding author} \\
  Department of Electrical Engineering\\
  Indian Institute of Technology Bombay\\
  Mumbai, 400076, India \\
  \texttt{anwaninavin@gmail.com} \\
   \And
  Bipin Rajendran\\
  Department of Electrical and Computer Engineering\\
  New Jersey Institute of Technology\\
  NJ, 07102 USA\\
  \texttt{bipin@njit.edu} \\
}

\begin{document}

\maketitle

\begin{abstract}
Spiking neural networks (SNNs) have garnered a  great amount of interest for supervised and unsupervised learning applications. This paper deals with the problem of training multi-layer feedforward SNNs. The non-linear integrate-and-fire dynamics employed by spiking neurons make it difficult to train SNNs to generate desired spike trains in response to a given input. To tackle this, first the problem of training a multi-layer SNN is formulated as an optimization problem such that its objective function is based on the deviation in membrane potential rather than the spike arrival instants. Then, an optimization method named Normalized Approximate Descent (NormAD), hand-crafted for such non-convex optimization problems, is employed to derive the iterative synaptic weight update rule. Next, it is reformulated to efficiently train multi-layer SNNs, and is shown to be effectively performing spatio-temporal error backpropagation. 
\textcolor{\highlightAdd}{The learning rule is validated by training $2$-layer SNNs to solve a spike based formulation of the XOR problem as well as training $3$-layer SNNs for generic spike based training problems.}
Thus, the new algorithm is a key step towards building deep spiking neural networks capable of 
efficient 
event-triggered learning.
\end{abstract}
\keywords{supervised learning \and spiking neuron \and  normalized approximate descent \and leaky integrate-and-fire \and  multilayer SNN \and spatio-temporal error backpropagation \and NormAD \and XOR problem}

\section{Introduction}
T{he} human brain assimilates multi-modal sensory data and uses it to  \emph{learn} and perform complex cognitive tasks such as pattern detection, recognition and completion. This ability is attributed to the dynamics of approximately $10^{11}$ neurons  interconnected through a network of $10^{15}$ synapses in the human brain. This has motivated the study of neural networks in the brain and attempts to mimic their learning and information processing capabilities to create  \emph{smart learning machines}.
Neurons, the fundamental information processing units in brain, communicate with each other by transmitting action potentials or spikes through their synapses. The process of learning in the brain emerges from synaptic plasticity viz., modification of strength of synapses triggered by spiking activity of corresponding neurons.

Spiking neurons are the third generation of artificial neuron models which closely mimic the dynamics of biological neurons. Unlike previous generations, both inputs and the output of a spiking neuron are signals in time. Specifically, these signals are  point processes of spikes in the membrane potential of the neuron, also called  a spike train. Spiking neural networks (SNNs) are computationally more powerful than previous generations of artificial neural networks as they incorporate  temporal dimension to the information representation and processing capabilities of neural networks \cite{maass1997networks,Bohte2004,CROTTY2005}. Owing to the incorporation of temporal dimension, SNNs naturally lend themselves for processing of signals in time such as audio, video, speech, etc. Information can be encoded in spike trains using temporal codes, rate codes or  population codes \cite{BialekSpikes,Gerstner1997,Prescott2008}. Temporal encoding uses exact spike arrival time for information representation and  has far more representational capacity than  rate code or population code \cite{thorpe2001spike}. However, one of the major hurdles in developing temporal encoding based applications of SNNs is the lack of efficient learning algorithms to train them with desired accuracy. 

In recent years, there has been significant progress in development of neuromorphic computing chips, which are specialized hardware implementations that emulate SNN dynamics inspired by the parallel, event-driven operation of the brain. Some notable examples are the TrueNorth chip from IBM~\cite{merolla2014million}, the Zeroth processor from Qualcomm~\cite{Gehlhaar:ASPLOS2014} and the Loihi chip from Intel~\cite{intelLoihi}. Hence, a breakthrough in learning algorithms for SNNs is apt and timely, to complement the progress of neuromorphic computing hardware.

The present success of deep learning based methods can be traced back to the breakthroughs  in learning algorithms for second generation artificial neural networks (ANNs) \cite{hinton2006fast}. As we will discuss in section~\ref{secLitRev}, there has been work on learning algorithms for SNNs in the recent past, but those methods have not found wide acceptance as they suffer from computational inefficiencies and/or lack of reliable and fast convergence. One of the main reasons for unsatisfactory performance of algorithms developed so far is that those efforts have been centered around adapting high-level concepts from learning algorithms for ANNs or from neuroscience and porting them to SNNs. In this work, we utilize properties specific to spiking neurons in order to develop a supervised learning algorithm for temporal encoding applications with spike-induced weight updates.

A supervised learning algorithm named \emph{NormAD}, for single layer SNNs was proposed in \cite{anwani2015normad}. For a spike domain training problem, it was demonstrated to converge at least an order of magnitude faster than the previous state-of-the-art. 
{Recognizing the importance of multi-layer SNNs for supervised learning, in this paper we extend the idea to derive NormAD based supervised learning rule for multi-layer feedforward spiking neural networks. It is a spike-domain analogue of the error backpropagation rule commonly used for ANNs and can be interpreted to be a realization of spatio-temporal error backpropagation.} The derivation comprises of first formulating  the training  problem for a multi-layer feedforward SNN as a non-convex optimization problem. Next, the Normalized Approximate Descent based optimization, introduced in \cite{anwani2015normad}, is employed to obtain an iterative weight adaptation rule. The new learning rule is successfully validated by employing it to train $2$-layer feedforward SNNs for a spike domain formulation of the XOR problem and $3$-layer feedforward SNNs for general spike domain training problems.

%
%
This paper is organized as follows. We begin with  a summary of learning methods for SNNs documented in literature in section~\ref{secLitRev}. 
Section~\ref{secSpikingNeurons} provides a brief introduction to spiking neurons and the mathematical model of Leaky Integrate-and-Fire (LIF) neuron, also setting the notations we use later in  the paper.
Supervised learning problem for feedforward spiking neural networks is discussed in section~\ref{supLearn}, starting with the description of a generic training problem for SNNs. Next we present a brief mathematical description of a feedforward SNN with one hidden layer and formulate the corresponding training problem as an optimization problem.
Then Normalized Approximate Descent based optimization is employed to derive the spatio-temporal error backpropagation rule in section.~\ref{multiLayerNormAD}. Simulation experiments to demonstrate the performance of the new learning rule for some exemplary supervised training problems are discussed in section.~\ref{validate}. Section~\ref{secConclusion} concludes the development with a discussion on directions for future research that can leverage the algorithm developed here towards the goal of realizing event-triggered deep spiking neural networks.

\section{Related Work}
\label{secLitRev}
 {One of the earliest attempts to demonstrate supervised learning with spiking neurons is the SpikeProp algorithm} \cite{bohte2002error}.  {However, it is restricted to single spike learning, thereby limiting its information representation capacity}. SpikeProp was then extended in \cite{booij2005gradient} to neurons firing multiple spikes. In these studies,  the training problem was formulated as an optimization problem with the objective function in terms of the difference between desired and observed spike arrival instants and  gradient descent was used to adjust the weights. However, since spike arrival time is a discontinuous function of the synaptic strengths,  the optimization problem is non-convex and  gradient descent is prone to local minima.

The biologically observed spike time dependent plasticity (STDP) has been used to derive weight update rules for SNNs in \cite{ponulak2010supervised, DL-ReSuMe,paugam2007supervised}. ReSuMe  and DL-ReSuMe took cues from  both STDP as well as the Widrow-Hoff rule to formulate a supervised learning algorithm \cite{ponulak2010supervised,DL-ReSuMe}. Though these algorithms are biologically inspired, the training time necessary to converge is a concern, especially for real-world applications in large networks. The ReSuMe algorithm has been extended to multi-layer feedforward SNNs using backpropagation in \cite{Sporea}.

Another notable spike-domain learning rule is PBSNLR  \cite{xu2013new}, which is an offline learning rule for the spiking perceptron neuron (SPN) model using the perceptron learning rule.  The PSD algorithm \cite{yu2013precise} uses Widrow-Hoff rule to empirically determine an equivalent learning rule for spiking neurons.  {The SPAN rule} \cite{mohemmed2012span}  {converts input and output spike signals into analog signals and then applies the Widrow-Hoff rule to derive a learning algorithm}. \textcolor{black}{Further, it is applicable  to the training of  SNNs with only one layer.} The SWAT algorithm \cite{Wade2010} uses STDP and BCM rule to derive a weight adaptation strategy for SNNs. The Normalized Spiking Error Back-Propagation (NSEBP) method proposed in \cite{NSEBP} is based on approximations of the simplified Spike Response Model for the neuron. The multi-STIP algorithm proposed in   \cite{Lin2016} defines an inner product for spike trains to approximate a learning cost function. \color{black}As opposed to the above approaches which attempt to develop weight update rules for fixed network topologies, there are also some efforts in developing  feed-forward networks based on evolutionary algorithms where new neuronal connections are progressively added and their weights and firing thresholds updated for every class label in the database  \cite{Schliebs2013,SOLTIC2010}. \color{black}

Recently, an algorithm to learn precisely timed spikes using a leaky integrate-and-fire neuron was presented in \cite{MEMMESHEIMER2014925}. The algorithm converges only when a synaptic weight configuration to the given training problem exists, and can not provide a close approximation, if the exact solution does not exist. To overcome this limitation, another algorithm to learn spike sequences with finite precision is also presented in the same paper. It allows a window of width $\epsilon$ around the desired spike instant within which the output spike could arrive and performs training only on the first deviation from such desired behavior. While it mitigates the non-linear accumulation of error due to interaction between output spikes, it also restricts the training to just one discrepancy per iteration. 
Backpropagation for training deep networks of LIF neurons has been presented in \cite{10.3389/fnins.2016.00508}, derived assuming an impulse-shaped post-synaptic current kernel and treating the discontinuities at spike events as noise. It presents remarkable results on MNIST and N-MNIST benchmarks using rate coded outputs, while in the present work we are interested in training multi-layer SNNs with temporally encoded outputs i.e., representing information in the timing of spikes.

Many previous attempts to formulate supervised learning as an optimization problem employ an objective function formulated in terms of the difference between desired and observed spike arrival times \cite{bohte2002error, booij2005gradient, xu2013supervised, florian2012chronotron}. We will see in section~\ref{secSpikingNeurons} that a leaky integrate-and-fire (LIF) neuron can be described as a non-linear spatio-temporal filter, spatial filtering being the weighted summation of the synaptic inputs to obtain the total incoming synaptic current and temporal filtering being the leaky integration of the synaptic current to obtain the membrane potential. Thus, it can be argued that in order to train multi-layer SNNs, we would need to backpropagate error in space as well as in time, and as we will see in section~\ref{multiLayerNormAD}, it is indeed the case for proposed algorithm. Note that while the membrane potential can directly control the output spike timings, it is also relatively more tractable through synaptic inputs and weights compared to spike timing. 
This observation is leveraged to derive a spaio-temporal error backpropagation algorithm by treating supervised learning as an optimization problem, with the objective function formulated in terms of the membrane potential. 

\section{Spiking Neurons}
\label{secSpikingNeurons}
Spiking neurons are simplified models of biological neurons e.g., the Hodgkin-Huxley equations describing the dependence of membrane potential of a neuron on its membrane current and conductivity of ion channels~\cite{hodgkin1952quantitative}. A spiking neuron is modeled as a multi-input system that receives inputs in the form of sequences of spikes, which are then transformed to analog current signals at its input synapses. The synaptic currents are superposed inside the neuron and the result is then transformed by its non-linear integrate-and-fire dynamics to a membrane potential signal with a sequence of stereotyped events in it, called action potentials or spikes.
Despite the continuous-time variations in the membrane potential of a neuron, it communicates with other neurons through the synaptic connections by chemically inducing a particular current signal in the post-synaptic neuron each time it spikes. Hence, the output of a neuron can be completely described by the time sequence of spikes issued by it. This is called \emph{spike based information representation} and is illustrated in Fig.~\ref{FigSpkNeuron}. The output, also called a spike train, is modeled as a point process of spike events.
Though the internal dynamics of an individual neuron is straightforward, a network of neurons can exhibit  complex dynamical behaviors. The processing power of neural networks is attributed to the massively parallel synaptic connections among neurons.

\begin{figure}[h]
	\centering
	\includegraphics[scale=0.38]{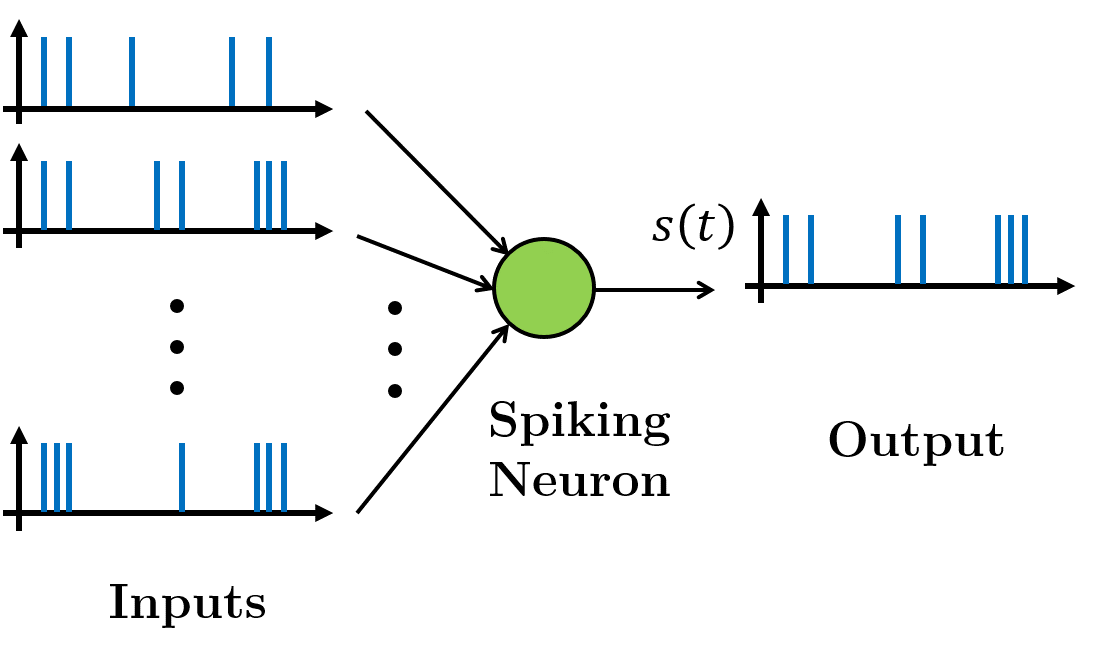}
	\caption{Illustration of spike based information representation: a spiking neuron assimilates multiple input spike trains to generate an output spike train. Figure adapted from \cite{anwani2015normad}.}
	\label{FigSpkNeuron}
\end{figure}

\subsection{Synapse}
\label{secSynapse}
The communication between any two neurons is spike induced and is accomplished through a directed connection between them known as a synapse. In the cortex, each neuron can receive spike-based inputs from thousands of other neurons.
If we model an incoming spike at a synapse as a unit impulse, then the behavior of the synapse to translate it to an analog current signal in the post-synaptic neuron can be modeled by a linear time invariant system with transfer function \(w\alpha(t)\). Thus, if a pre-synaptic neuron issues a spike at time $t^{f}$, the post-synaptic neuron receives a current $i(t)=w\alpha(t-t^{f})$. Here the waveform \(\alpha(t)\) is known as the post-synaptic current kernel and the scaling factor $w$ is called the weight of the synapse. The weight varies from synapse-to-synapse and is representative of its conductance, whereas \(\alpha(t)\) is independent of synapse and is commonly modeled as
\begin{equation}
\label{Eqkernel}
\alpha(t)= \left[ \exp({-t}/{\tau_1})-\exp({-t}/{\tau_2})\right]u(t),
\end{equation}
where $u(t)$ is the  Heaviside step function and $\tau_1 > \tau_2$. Note that the synaptic weight $w$ can be positive or negative, depending on which the synapse is said to be excitatory or inhibitory respectively. Further, we assume that the synaptic currents do not depend on the membrane potential or reversal potential of the post-synaptic neuron.

Let us assume that a neuron receives inputs from $n$ synapses and spikes arrive at the $i^{th}$ synapse at instants $t^i_1, t^i_2, \ldots$. Then, the input signal at the $i^{th}$ synapse (before scaling by synaptic weight $w_{i}$) is given by the expression
\begin{align}
 c_i(t)=\sum_{f}\alpha(t-t^{i}_{f}).
 \label{eqcoft}
\end{align}
The synaptic weights of all input synapses to a neuron are usually represented in a compact form as a weight vector $\vec{w}=\begin{bmatrix}w_1 & w_2 & \cdots & w_n \end{bmatrix}^T$, where $w_i$ is the weight of the $i^{th}$ synapse. The synaptic weights perform spatial filtering over the input signals resulting in an aggregate synaptic current received by the neuron:
\begin{align}
I(t)=\vec{w}^T\vec{c}(t),
\label{eqIoft}
\end{align}
where $ \vec{c}(t)=\begin{bmatrix}c_1(t) & c_2(t) & \cdots & c_n(t)\end{bmatrix}^T$.
A simplified illustration of the role of synaptic transmission in overall spike based information processing by a neuron is shown in Fig.~\ref{Figkernel}, where an incoming spike train at a synaptic input is translated to an analog current with an amplitude depending on weight of the synapse. The resultant current at the neuron  from all its upstream synapses is transformed non-linearly to generate its membrane potential with instances of spikes  viz., sudden surge in membrane potential followed by an immediate drop.

\captionsetup{subrefformat=parens}

\begin{figure}[!h]
	\centering
    \begin{subfigure}[b]{\textwidth}
		\centering
        \includegraphics[scale=0.4]{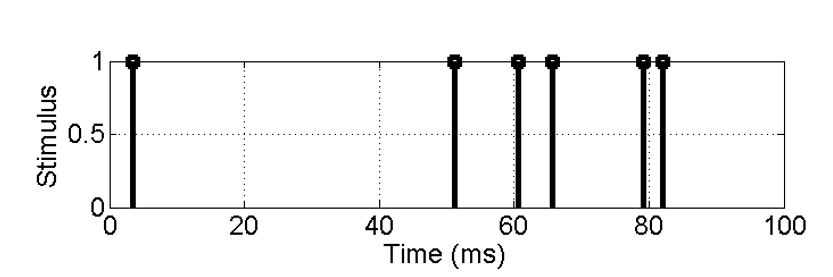}
        \caption{\label{inputSpikes}}
    \end{subfigure}
    \begin{subfigure}[b]{\textwidth}
		\centering
        \includegraphics[scale=0.4]{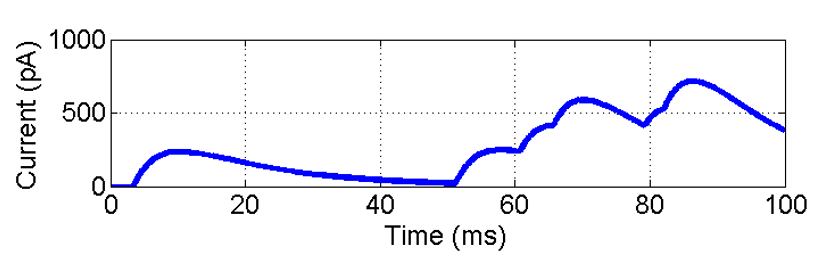}
        \caption{\label{inputCurrent}}
    \end{subfigure}    
    \begin{subfigure}[b]{\textwidth}
		\centering
        \includegraphics[scale=0.4]{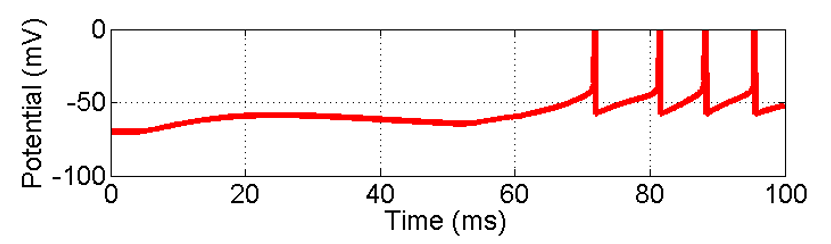}
        \caption{\label{outputSpikes}}
    \end{subfigure}  
        
	\caption{Illustration of a simplified synaptic  transmission and neuronal integration model: \subref{inputSpikes}  exemplary spikes (stimulus) arriving at a synapse, \subref{inputCurrent} the resultant current being fed to the neuron through the synapse and \subref{outputSpikes} the resultant membrane potential of the post-synaptic neuron.}
	\label{Figkernel}
\end{figure}


\subsubsection*{Synaptic Plasticity}
The response of a neuron  to stimuli greatly depends on the conductance of its input synapses. Conductance of a synapse (the synaptic weight) changes based on the spiking activity of the corresponding pre- and post-synaptic neurons. A neural network's ability to learn is attributed to this activity dependent synaptic plasticity. 
Taking cues from biology, we will also constrain the learning algorithm we develop to have spike-induced synaptic weight updates.

\subsection{Leaky Integrate-and-Fire (LIF) Neuron}
\label{secLIF}
In leaky integrate-and-fire (LIF) model of spiking neurons, the transformation from aggregate input synaptic current $I(t)$ to the resultant membrane potential $V(t)$ is governed by the  following differential equation and reset condition \cite{stein1967some}:
\begin{align}
\label{lif1}
C_m & \frac{dV(t)}{dt}=-g_L(V(t)-E_L)+I(t),\\
\nonumber &V(t)\longrightarrow E_L \quad \text{when } V(t)\geq V_T.
\end{align}
Here, $C_m$ is the membrane capacitance, $E_L$ is the leak reversal potential, and $g_L$ is the leak conductance. If $V(t)$ exceeds the threshold potential $V_T$, a spike is said to have been issued at time $t$. The expression $V(t)\longrightarrow E_L$ when $V(t)\geq V_T$ denotes that $V(t)$ is reset to $E_{L}$ when it exceeds the threshold $V_T$. Assuming that the neuron issued its latest spike at time $t_{l}$, Eq.~\eqref{lif1} can be solved for any time instant $t>t_l$, until the issue of the next spike,  with the initial condition $V(t_{l}) = E_{L}$ as
\begin{align}
\label{lif2}
V&(t)=E_L+\left(I(t)u(t-t_{l})\right) \ast h(t)\\
\nonumber &V(t)\longrightarrow E_L \quad \text{when } V(t)\geq V_T,
\end{align}
where `$\ast$' denotes linear convolution and
\begin{align}
h(t)=\frac{1}{C_m}\exp({-t}/{\tau_L})u(t),
\label{eqhoft}
\end{align}
with $\tau_L=C_m/g_L$ is the neuron's leakage time constant.
Note from Eq.~\eqref{lif2} that the aggregate synaptic current $I(t)$ obtained by spatial filtering of all the input signals is first gated with a unit step located at $t=t_{l}$ and then fed to a leaky integrator with impulse response \(h(t)\), which performs temporal filtering. So the LIF neuron acts as a non-linear spatio-temporal filter and the non-linearity is a result of the reset at every spike.

Using Eq.~\eqref{eqIoft} and \eqref{lif2} the membrane potential can be represented in a compact form as
\begin{equation}
V(t)=E_L + \vec{w}^T\vec{d}(t),
\label{lif6}
\end{equation}
where  $\vec{d}(t)=\begin{bmatrix}d_1(t) & d_2(t) & \cdots & d_n(t)\end{bmatrix}^T$ and
\begin{equation}
d_i(t)= \left(c_i(t)u(t-t_{l})\right)*h(t).
\label{eqdoft}
\end{equation}
From Eq.~\eqref{lif6}, it is evident that $\vec{d}(t)$ carries all the information about the input necessary to determine the membrane potential. 
It should be noted that $\vec{d}(t)$ depends on weight vector $\vec{w}$, since $d_i(t)$ for each $i$ depends on  the last spiking instant $t_{l}$, which in turn is dependent on the weight vector $\vec{w}$.

The neuron is said to have spiked only when  the membrane potential $V(t)$ reaches the threshold $V_{T}$. Hence, minor changes in the weight vector $\vec{w}$ may eliminate an already existing spike or introduce new spikes. Thus, spike arrival time $t_{l}$ is a discontinuous function of $\vec{w}$. Therefore, Eq.~\eqref{lif6} implies that $V(t)$ is also discontinuous in weight space.
Supervised learning problem for SNNs is generally framed as an optimization problem with the cost function described in terms of the spike arrival time or  membrane potential. However, the discontinuity of spike arrival time as well as $V(t)$ in weight space renders the cost function discontinuous and hence the optimization problem non-convex. Commonly used steepest descent methods can not be applied to solve such non-convex optimization problems. In this paper, we extend the optimization method named \textit{Normalized Approximate Descent}, introduced in \cite{anwani2015normad} for single layer SNNs  to multi-layer SNNs. 

\subsection{Refractory Period} After issuing a spike, biological neurons can not immediately issue another spike for a short period of time. This short duration of inactivity is called the absolute refractory period ($\Delta_{abs}$). This aspect of spiking neurons has been omitted in the above discussion for simplicity, but can be easily incorporated in our model by replacing $t_{l}$ with $(t_{l} + \Delta_{abs})$ in the equations above. 

Armed with a compact representation of membrane potential in Eq.~\eqref{lif6}, we are now set to derive a synaptic weight update rule to accomplish supervised learning with spiking neurons.

\ignore{
\subsection{Supervised Learning using NormAD}
\label{supLearnSingleLayer}
In \cite{anwani2015normad}, we demonstrated that supervised learning algorithm  NormAD can be  used to solve the training problem for single layer SNN and demonstrated that it is at least an order of magnitude faster than ReSuMe at it. But there the performance of the algorithm was evaluated over the training data itself, hence it did not reveal the generalization ability of the algorithm.
Here we intend to demonstrate its generalization ability by employing the weight adaptation rule to solve a supervised learning problem for a spiking neuron as shown in Fig.~\ref{supLearn_prob}.
The problem is to learn the behavior of a spike based multiple input system using NormAD learning rule.
Success of such an experiment is predicated on the existence of a solution to the problem for the network employed; and thus in this case, it depends on existence of a configuration of a spiking neuron with which it can model the behavior of the concerned system. If such a solution exists, then we want the learning rule to determine that particular configuration based on the available training data.

\begin{figure}[!h]
	\centering
	\includegraphics[scale=0.6]{supLearn_prob.png}
	\caption{Supervised learning setup employing a spiking neuron and NormAD weight adaptation rule.}
	\label{supLearn_prob}
\end{figure}

To simplify the problem, we exploit the fact that a readily available spike based system that can certainly be modeled by a spiking neuron is a spiking neuron itself. Thus, we want to solve the supervised learning problem shown in Fig.~\ref{idSol}.
In this set-up, the slave neuron learns the unknown synaptic weights of  the master neuron by employing NormAD based training.
This supervised learning problem is guaranteed to have a solution weight vector (say $\vec{w}_{d}$) for the slave neuron that allows it to accurately model the behavior of master neuron. Hence, the outcome of the experiment would be an estimate of the synaptic weight vector $\vec{w}_{d}$ of master neuron.

\begin{figure}[!h]
	\centering
	\includegraphics[scale=0.6]{supLearn_neuron.png}
	\caption{Supervised learning setup for learning the behavior of a spiking neuron (master neuron) using NormAD weight adaptation rule.}
	\label{idSol}
\end{figure}

The experimental setup is illustrated in Fig.~\ref{idSol}. Both the master and the slave neurons have $n$ input synapses. The synaptic weights of the master neuron are unknown and need to be estimated by training the slave neuron. Training data is obtained by feeding the master neuron  with a set of $n$ input spike trains and corresponding output spike train is used as the desired spike train while training the slave neuron on the same input. All the $n$ input spike trains have independently and identically Poisson distributed spikes.

The experiment was conducted with both master and slave neurons having $n=100$ input synapses. The weight update was performed periodically after every $T$ seconds. The training data thus comprises of $k$ distinct epochs, each of duration $T$ seconds, such that data in each epoch comprises of $n=100$ input spike trains. Spike trains with independent and identically Poisson distributed spikes arriving at a rate $10$\,s$^{-1}$ were used for each input. Training was carried out by selecting one of the $k$ epochs randomly in every iteration and applying the corresponding input to both the neurons. The weight update given by NormAD as \cite{anwani2015normad}:
\begin{equation}
\Delta \vec{w} = r \int_{0}^T e\left(t\right) \frac{ \widehat{\vec{d}}\left(t\right)}{\|  \widehat{\vec{d}}\left(t\right)\|} dt
\label{eqNormAD}
\end{equation}
was then applied to the slave neuron. Here $\widehat{\vec{d}}\left(t\right) = \vec{c}\left(t\right)*\widehat{h}\left(t\right)$ and $\widehat{h}\left(t\right)= \left(\nicefrac{1}{C_m}\right)\exp\left(-t/\tau_L'\right)u\left(t\right)$ with $\tau_L' \leq \tau_L$.

\subsection*{Performance Metrics}
Training performance can be assessed by correlation between output spike of master and slave neurons. It can be quantified in terms of correlation between low-pass versions of the two spike trains.
The correlation metric which was introduced in \cite{vanRossum} and is commonly used  in characterizing the spike based learning efficiency \cite{anwani2015normad,resume} is defined as
\begin{align}
C=\frac{\langle L(s_{d}(t)),L(s_{o}(t)) \rangle}{\|L(s_{d}(t))\| \cdot \|L(s_{o}(t))\|}.
\end{align}
Here $L(s(t))$ is the low-pass filtered spike train $s(t)$ obtained by convolving it with a one-sided falling exponential i.e.,
\[L(s(t)) = s(t)*(\exp({-t}/{\tau_{LP}})u(t)),\] with $\tau_{LP}=5\,$ms.

The relative  error in determination of weight vector can be used as a metric to evaluate the learning performance. Let $\vec{w}_{d}$ and $\vec{w}$ be the weight vectors of master and slave neuron respectively, then error in estimation of weight vector is defined as
\begin{align}
	\vec{e}_{\vec{w}} &= \vec{w} - \vec{w}_{d}.
\end{align}
Thus the relative error in estimation is 
\begin{align}
	\text{relative error, } \epsilon = \frac{\|\vec{e}_{\vec{w}}\|}{\|\vec{w}_{d}\|}.
\end{align}
Equipped with the performance metrics we now analyze the impact of size of training dataset and the learning rate on supervised learning using NormAD. This analysis is important to arrive at a training strategy for optimal learning.

\subsection{Size of Training Dataset}
Training data is the primary source of insight into the system to be learned. A small training dataset may lead to a situation where the learning problem is too loosely constrained to have a unique solution. In this case, the solution obtained may fit the given training data well and still not generalize to other input-output pairs from the same system. This is the classical case of \emph{over-fitting} and is encountered when there are multiple solutions that fit to the training data and algorithm converges to any one of them and not necessarily  the configuration which actually models the desired system correctly.

\begin{figure}[!h]
	\centering
	\includegraphics[scale=0.6]{relErr_100Ninp_amtData_500msEpoch.png}
	\caption{Plots of training performance as correlation metric (top) and learning performance as relative error (bottom) against training iteration number for different sizes of training dataset.}
	\label{figAmtData}
\end{figure}

We observe this phenomenon in the spike domain learning problem as well. The learning experiment was conducted for three different sizes of training dataset viz., $k=1$, $k=5$ and $k=25$ epochs of duration $T=500\,$ms each.
Figure~\ref{figAmtData} shows results of the experiments. Top panel shows the mean correlation metric between the spike trains of the master and slave neurons over all $k$ epochs of training data for the three values of $k$.
Bottom panel shows the relative error in weight vector for the three cases.
It can be observed that for $k=1$, the correlation metric rises close to $1$ denoting good training performance, while the  relative error first decreases and then diverges away from $0$. It is clear that the problem posed by small training data has multiple fitting solutions and NormAD based training results in  one of the solutions. This is further supported by the plots for $k=5$ epochs. Though a  better relative error performance is obtained, it still incurs slight divergence away from $0$ - indicating insufficient training data. Whereas for $k=25$, a constant decline in relative error is observed over the training period.
So we conclude that when training data is not sufficiently large, the algorithm may diverge away from the ideally desired solution, while still showing good training performance. It is clear that sufficiently large training dataset is necessary to accurately capture all the characteristics of the system being modeled.

When faced with a supervised learning problem with small training data, over-fitting   can be avoided by using \emph{early stopping} based regularization. Note from relative error plots in Fig.~\ref{figAmtData} (bottom panel) that best learning performance for $k=1$\,epoch and $5$\,epochs could have been achieved if training was stopped at around $20$ and $80$ training iterations respectively.

\subsection{Learning Rate}
Now we explore the impact of learning rate on performance of supervised learning. For this, training was carried out using NormAD weight adaptation rule in Eq.~\eqref{eqNormAD} with 4 different values of learning rate $r$ viz., $ 0.5r_{0}$, $r_{0}$, $5r_{0}$ and $10r_{0}$, with $r_0=44.72$\,pA.
To avoid over-fitting, we use  $k=100$\,epochs of training data, with each epoch  $T=500$\,ms long. 
Such $100$ independent experiments were conducted and the mean statistics of the relative error as a function of the training iteration are presented in Fig.~\ref{figConstRate}. We note following two important points from this figure.
\subsubsection{Exponential Convergence}
For small learning rates viz., $ 0.5r_{0}$ and $r_{0}$, mean relative error decreases almost exponentially (linear decline on $\log$-scale). This is empirical evidence of an approximately exponential convergence or first order convergence of the NormAD  supervised learning algorithm. Hence for small enough values of learning rate $r$, the relative error after $(i+1)^{th}$ iteration is given by
\begin{align}
	\label{EqRelErr_with_r}
	\text{relative error, } \epsilon_{i+1} = \epsilon_{i} \times f(r),
\end{align}
where $f(r)$ ($\leq 1$) is a factor dependent on $r$ with $f(0)=1$.

\subsubsection{Knee-point}
In case of higher learning rates viz., $5r_{0}$ and $10r_{0}$, initially there is a steep drop in the relative error until  the magnitude of the error $\|\vec{w}-\vec{w}_{d}\|$ becomes comparable with magnitude of weight update $\|\Delta \vec{w}\|$ leading to a \emph{knee-point}. After the onset of this knee-point, the relative error starts to saturate, indicating diminishing returns on further training.
Farther from the knee-point, we observe fluctuations of small magnitude in the relative error in both directions indicating no further progress with training.
\begin{figure}[!h]
	\centering
	\includegraphics[scale=0.6]{relErr_100Ninp_100op_rateImpact.png}
	\caption{Plots of mean relative error on log scale vs training iteration number for 4 different values of learning rate.}
	\label{figConstRate}
\end{figure}

\subsection{Learning rate scheduling}
After entering the saturation region during training, error can be reduced further by reducing the magnitude of weight update $\|\Delta \vec{w}\|$ and hence learning rate $r$. This motivates use of learning rate scheduling protocols to further improve the benefits from iterative training. We should start with a learning rate $r$ which provides steepest possible descent to quickly arrive at the corresponding knee point. Thereafter, the next value of $r$ should again be chosen to provide steepest possible descent and should be kept so until the corresponding knee point is reached. In the ideal case, the learning rate $r$ should be chosen such that the  corresponding knee-point is reached after one iteration itself and new value of $r$ is chosen for the next iteration. 

This approach of continuously reducing the learning rate effectively  adds an increasing amount of inertia to the excursions of synaptic weight vector in the weight space in the later iterations of training. This is equivalent to  adding a bias in favor of the learning that took place in the past by making finer adjustments  in later training iterations.

The first learning protocol we implemented was motivated by the observation that the  relative error decreases exponentially (Eq.~\eqref{EqRelErr_with_r}) for a constant and small enough learning rate $r$. The implemented learning protocol was
\begin{align}
	\label{EqSubExpSched}
	r_{i+1} = r_{i}\times f(r_{i}).
\end{align}
Here the function $f(\cdot)$ is not known, so we assume following definition for it
\begin{align}
	\label{EqSubExpFactor}
	f(r) = a^{-r},
\end{align}
where $a > 1$ is a constant.
\ignore{\begin{align}
	\label{EqSubExpSched}
	r_{i+1} = r_{i}\times a^{-r_i}
\end{align}
where $a$ is a constant.}
In addition to this, we also analyze two other learning rate schedules viz., 
an exponential schedule of learning rate
\begin{align}
	\label{EqExpSched}
	r_{i} \propto e^{-r_{i}}
\end{align}
and a schedule where learning rate $r_{i}$ is inversely proportional to iteration number $i$
\begin{align}
	\label{EqExpSched}
	r_{i}  \propto \frac{1}{i}
\end{align}
In addition,   upper and lower bounds were also imposed on the learning rate as hyper-parameters to optimize the learning performance.

\begin{figure*}[!ht]
	\centering
	\includegraphics[scale=0.45]{relErr_100Ninp_100op_rateSched_errorBar.png}
	\caption{Plots of three different learning rate schedules (blue stars) and corresponding mean relative error (red circles) with standard deviation error-bars against training iteration number.}
	\label{figRateSche_ErrorBar}
\end{figure*}

\begin{figure}[!h]
\centering
	\includegraphics[scale=0.6]{weights_100Ninp_genProp.png}
	\caption{Synaptic weights of the master and slave neuron at the end of training, showing that the NormAD learning algorithm is able to precisely determine the synaptic weights of the unknown system by observing its input/output spike behavior.}
	\label{figMSweights}
\end{figure}

The hyper-parameters were tuned for all the three schedules to obtain  best learning performance.
Figure~\ref{figRateSche_ErrorBar} shows plots of the three rate schedules and corresponding learning performance in terms of mean relative error. The rate schedule implemented in the experiment is shown in blue stars as a function of the training iteration number $i$. The plateau  in the plot of learning rate represents the upper/lower bound for the learning rate. The supervised learning experiment with master and slave neurons having $n=100$ inputs each was carried out with the three learning rate schedules. The training dataset comprised of $k=1000\,$epochs with each epoch comprising of $n=100$ independent input spike trains over a $T=500\,$ms  duration. Training was carried out for $4000$ iterations.  $100$ such experiments were conducted, each with a learning problem independent of the rest. The red circles in Fig.~\ref{figRateSche_ErrorBar} represent the mean relative error over the $100$ learning problems along with standard deviation error-bars. It can be observed that the mean relative error reaches below $0.009$ for all the three schedules with slight variations. Among the three, the schedule  given by Eq.~\eqref{EqSubExpSched} achieves best performance during most of the training. The synaptic weights of the master and slave neuron at the end of training for an exemplary experiment are plotted in Fig. \ref{figMSweights}.

\begin{figure}[!h]
	\centering
	\includegraphics[scale=0.6]{testData_corr_100Ninp_100op_rateSched_errorBar}
	\caption{Plots of mean spike correlation metric with standard deviation error-bars over the $100$ learning problems and for the three learning rate schedules. The correlation metric is corresponding to $10\,$s of test data used to evaluate the learning performance.}
	\label{figCorrMS}
\end{figure}

In order to test the generalization ability of the algorithm, $10\,$seconds of test data was generated  with a new set of input spike trains independent from those in the training data. The input spike trains in the test data were fed to the master and slave neurons and spike correlation metric $C$ of corresponding output spike trains was determined at 20 equally spaced points during the training (i.e., with a period of $200$ iterations).
Mean of correlation metric $C$ over the $100$ learning experiments is shown in Fig.~\ref{figCorrMS} with standard deviation error-bars for the $20$ points during training. It can be observed that mean correlation crossed $0.8$ in less than $200$ training iterations and increases monotonically reaching at $0.98$ with  negligible variance within $4000$ iterations. Thus supervised learning has been successfully demonstrated using NormAD learning rule. Note that test data was just used to assess the learning performance and no training was performed based on it.

}

\section{Supervised Learning using Feedforward SNNs}
\label{supLearn}
Supervised learning is the process of obtaining an approximate model of an unknown system based on available training data, where the training data comprises of a set of inputs to the system and corresponding outputs. The learned model should not only fit to the training data well but should also generalize well to unseen samples from the same input distribution. The first requirement viz. to obtain a model so that it best fits the given training data is called training problem. 
Next we discuss the training problem in spike domain, solving which is a stepping stone towards solving the more constrained supervised learning problem.

\begin{figure}[h]
\centering
\includegraphics[scale=0.4]{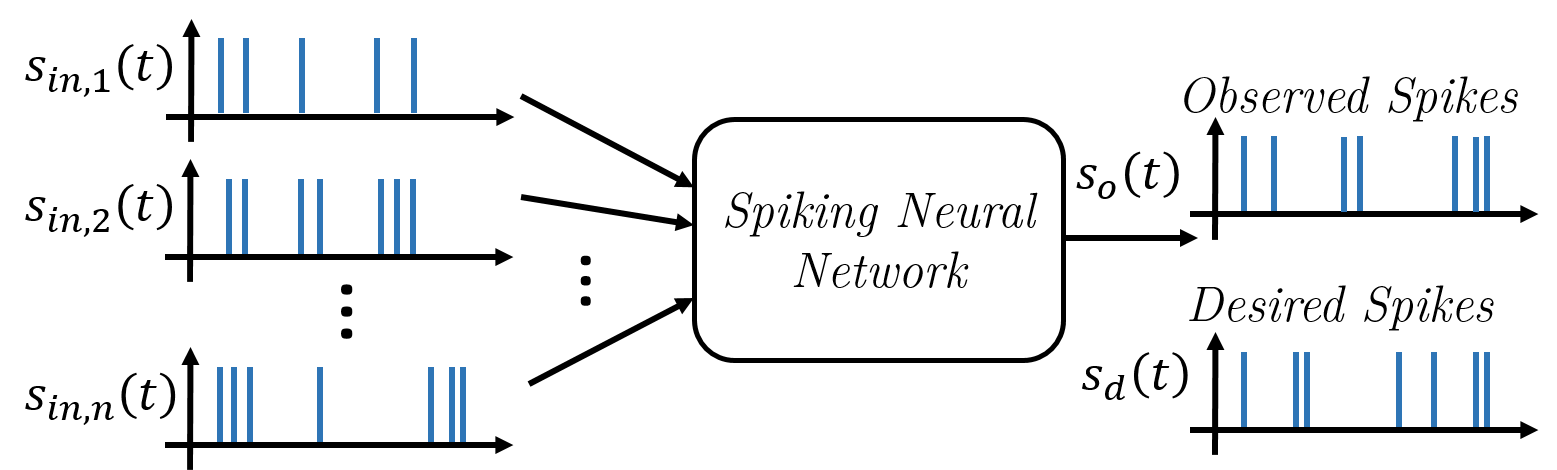}
\caption{Spike domain training problem: Given a set of $n$ input spike trains fed to the SNN through its $n$ inputs, determine the  weights of synaptic connections constituting the SNN so that the observed output spike train is as close as possible to the given desired spike train.}
\label{supLearnfig}
\end{figure}

\subsection{Training Problem}
\label{probTrain}
A canonical training problem for a spiking neural network is illustrated in Fig.~\ref{supLearnfig}. There are $n$ inputs to the network such that $s_{in,i}(t)$ is the spike train fed at the $i^{th}$ input. Let the desired output spike train corresponding to this set of input spike trains be given in the form of an impulse train as
\begin{align}
s_{d}(t)=\sum_{i=1}^f \delta (t-t^{i}_{d}).
\end{align}  
Here, $\delta(t)$ is the Dirac delta function and $ t_{d}^{1},\, t_{d}^{2},\, ...,\, t_{d}^{f} $ are the desired spike arrival instants over a duration $T$, also called an epoch.
The aim is to determine the weights of the synaptic connections constituting the SNN so that its output $s_{o}(t)$ in response to the given input is as close as possible to the desired spike train $s_{d}(t)$.

NormAD based iterative synaptic weight adaptation rule was proposed in \cite{anwani2015normad} for training single layer feedforward SNNs.
However, there are many systems which can not be modeled by any possible configuration of single layer SNN and necessarily require a multi-layer SNN. 
Hence, now we aim to obtain a supervised learning rule for multi-layer spiking neural networks.
The change in weights in a particular iteration of training can be based on the given set of input spike trains, desired output spike train and the corresponding observed output spike train. Also, the weight adaptation rule should be constrained to have spike induced weight updates for computational efficiency. 
For simplicity, we will first derive the weight adaptation rule for training a feedforward SNN with one hidden layer and then state the general weight adaptation rule for feedforward SNN with an arbitrary number of layers.

Imaginary Buffer Line
\subsection*{Performance Metric}
Training performance can be assessed by the correlation between desired and observed outputs. It can be quantified in terms of the cross-correlation between low-pass filtered versions of the two spike trains. The correlation metric which was introduced in \cite{vanRossum} and is commonly used in characterizing the spike based learning efficiency \cite{anwani2015normad,resume} is defined as
\begin{align}
C=\frac{\langle L(s_{d}(t)),L(s_{o}(t)) \rangle}{\|L(s_{d}(t))\| \cdot \|L(s_{o}(t))\|}.
\end{align}
Here, $L(s(t))$ is the low-pass filtered spike train $s(t)$ obtained by convolving it with a one-sided falling exponential i.e.,
\[L(s(t)) = s(t)*(\exp({-t}/{\tau_{LP}})u(t)),\] with $\tau_{LP}=5\,$ms.

\DeclareRobustCommand{\hsout}[1]{\texorpdfstring{\sout{#1}}{#1}}

\begin{figure}[!h]
\centering
\includegraphics[scale=0.45]{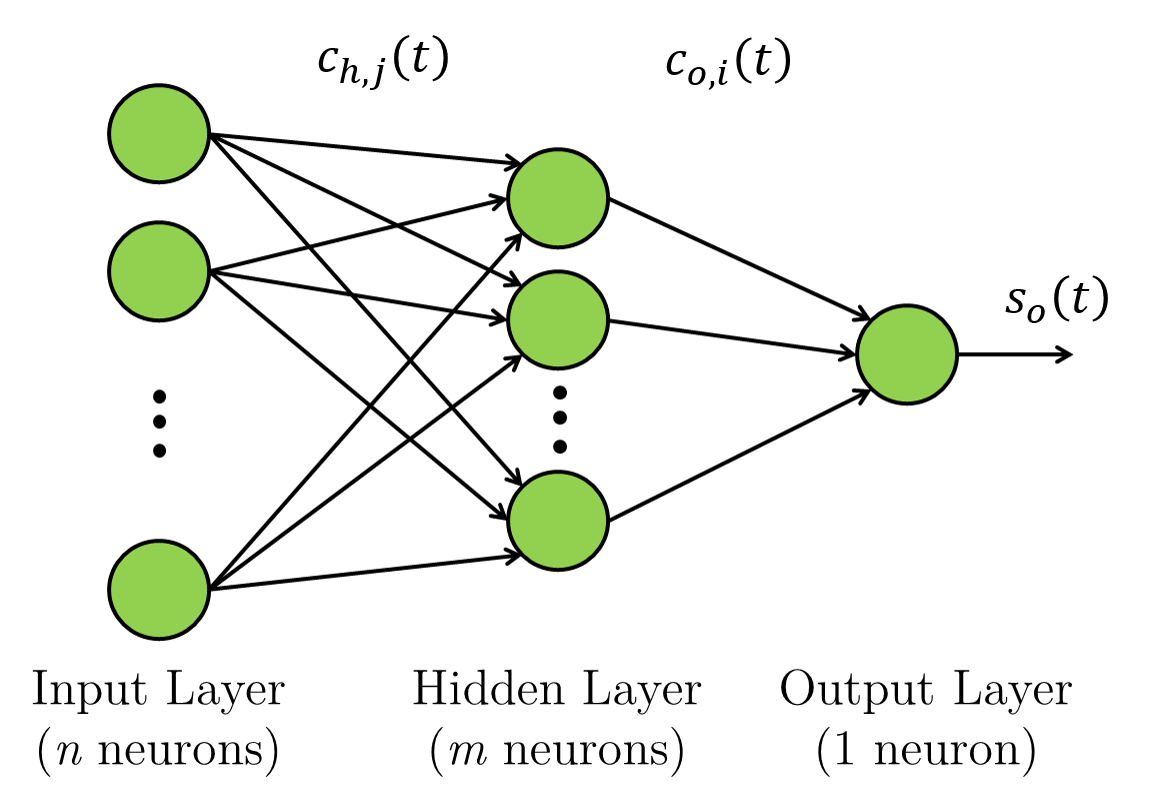}
\caption{Feedforward SNN with one hidden layer ($n \rightarrow m \rightarrow 1$) also known as 2-layer feedforward SNN.}
\label{multiLayerSNN}
\end{figure}

\subsection{Feedforward SNN with One Hidden Layer}
\label{multiLayerIntro}
A fully connected feedforward SNN with one hidden layer is shown in Fig.~\ref{multiLayerSNN}. It has $n$ neurons in the input layer, $m$ neurons in the hidden layer and $1$ in the output layer.
\color{\highlightAdd}
It is also called a 2-layer feedforward SNN, since the neurons in input layer provide spike based encoding of sensory inputs and do not actually implement the neuronal dynamics.
\color{black}
We denote this network as a $n \rightarrow m \rightarrow 1$ feedforward SNN. This basic framework can be extended to the case where there are multiple neurons in the output layer or the case where there are multiple hidden layers. 
The weight of the synapse from the $j^{th}$ neuron in the input layer to the $i^{th}$ neuron in the hidden layer is denoted by $w_{h,ij}$ and that of the synapse from the $i^{th}$ neuron in the hidden layer to the neuron in output layer is denoted by $w_{o,i}$. All input synapses to the $i^{th}$ neuron in the hidden layer can be represented compactly as an $n$-dimensional vector $\vec{w}_{h,i} = \begin{bmatrix}w_{h,i1} & w_{h,i2} & \cdots & w_{h,in}\end{bmatrix}^T$. Similarly input synapses to the output neuron are represented as an $m$-dimensional vector $\vec{w}_o = \begin{bmatrix}w_{o,1} & w_{o,2} & \cdots & w_{o,m}\end{bmatrix}^T$.

Let $s_{in,j}\left(t\right)$ denote the spike train fed by the $j^{th}$ neuron in input layer to neurons in hidden layer.
Hence, from Eq.~\eqref{eqcoft}, the signal fed to the neurons in the hidden layer from the $j^{th}$ input (before scaling by synaptic weight) $c_{h,j}\left(t\right)$ is given as
\begin{align}
c_{h,j}\left(t\right)= s_{in,j}\left(t\right)*\alpha \left(t\right).
\end{align}
Assuming $t_{h,i}^{last}$ as the latest spiking instant of the $i^{th}$ neuron in the hidden layer, define $\vec{d}_{h,i}\left(t\right)$ as
\begin{align}
\label{eqd_hoft}
\vec{d}_{h,i}\left(t\right) = \left(\vec{c}_{h}\left(t\right)u\left(t-t_{h,i}^{last}\right)\right)*h\left(t\right),
\end{align}
where $\vec{c}_{h}\left(t\right)= \begin{bmatrix}c_{h,1} & c_{h,2} & \cdots & c_{h,n}\end{bmatrix}^T$.
From Eq.~\eqref{lif6}, membrane potential of the $i^{th}$ neuron in hidden layer is given as
\begin{align}
V_{h,i}\left(t\right)=E_{L}+\vec{w}_{h,i}^T \vec{d}_{h,i}\left(t\right).
\end{align}
Accordingly, let $s_{h,i}\left(t\right)$ be the spike train produced at the $i^{th}$ neuron in the hidden layer. The corresponding signal fed to the output neuron is given as
\begin{align}
c_{o,i}\left(t\right)= s_{h,i}\left(t\right)*\alpha \left(t\right).
\label{eq_c_oi_of_t}
\end{align}
Defining $\vec{c}_{o}\left(t\right)= \begin{bmatrix}c_{o,1} & c_{o,2} & \cdots & c_{o,m}\end{bmatrix}^T$ and denoting the latest spiking instant of the  output neuron by $t_{o}^{last}$ we can define
\begin{align}
\label{eqd_o_oft}
\vec{d}_{o}\left(t\right) &= \left(\vec{c}_{o}\left(t\right)u\left(t-t_{o}^{last}\right)\right)*h\left(t\right).
\end{align}
Hence, from Eq.~\eqref{lif6}, the membrane potential of the output neuron is given as
\begin{align}
V_{o}\left(t\right)=E_{L}+\vec{w}_{o}^T \vec{d}_{o}\left(t\right)
\end{align}
and the corresponding output spike train is denoted $s_{o}\left( t \right)$.

\subsection{Mathematical Formulation of the Training Problem}
\label{multiLayerProbFormulation}
To solve the training problem employing an $n \rightarrow m \rightarrow 1$ feedforward SNN, effectively we need to determine synaptic weights $W_h = \begin{bmatrix}\vec{w}_{h,1} & \vec{w}_{h,2} & \cdots & \vec{w}_{h,m}\end{bmatrix}^T$ and $\vec{w}_o$ constituting its synaptic connections, so that the output spike train $s_{o}\left(t\right)$ is as close as possible to the desired spike train $s_{d}\left(t\right)$ when the SNN is excited with the given set of input spike trains  $s_{in,i}\left(t\right)$, $i \in \lbrace1,2,...,n \rbrace$. Let $V_{d}\left(t\right)$ be the corresponding ideally desired membrane potential of the output neuron, such that the respective output spike train is $s_{d}\left(t\right)$.
Also, for a particular configuration $W_h$ and $\vec{w}_o$ of synaptic weights of the SNN, let $V_{o}\left(t\right)$ be the observed membrane potential of the output neuron in response to the given input and $s_{o}\left(t\right)$ be the respective output spike train. We define the cost function for training as
\begin{equation}
J\left(W_h, \vec{w}_o\right)=\frac{1}{2} \int_{0}^T \left(\Delta V_{d,o}(t)\right)^2 |e\left(t\right)|dt,
\label{costMulti}
\end{equation}
where
\begin{equation}
\Delta V_{d,o}(t) = V_{d}\left(t\right)-V_{o}\left(t\right)
\label{deltaV_do}
\end{equation}
and 
\begin{equation}
e\left(t\right)= s_{d}\left(t\right)-s_{o}\left(t\right).
\label{eqeoft}
\end{equation}
That is, the cost function is determined by the difference $\Delta V_{d,o}(t)$, only at the instants in time where there is a discrepancy between the desired and observed spike trains of the output neuron.
Thus, the training problem can be expressed as following optimization problem:
\begin{equation}
\begin{aligned}
& \min
& & J\left(W_h, \vec{w}_o\right) \\
& \text{s.t.} 
& & W_h \in \mathbb{R}^{m \times n}, \vec{w}_o \in \mathbb{R}^{m}
\end{aligned}
\label{multilayerOptiProb}
\end{equation}
Note that the optimization with respect to $\vec{w}_o$ is same as training a single layer SNN, provided the spike trains from neurons in the hidden layer  are known. In addition, we  need to derive the weight adaptation rule for synapses feeding the hidden layer viz., the weight matrix $W_{h}$, such that spikes in the hidden layer are most suitable to generate the desired spikes at the output. The cost function is dependent on the membrane potential $V_{o}\left(t\right)$, which is discontinuous with respect to $\vec{w}_o$  as well as $W_{h}$. Hence the optimization problem~\eqref{multilayerOptiProb} is non-convex and susceptible to local minima when solved with steepest descent algorithm.

\section{NormAD based Spatio-Temporal Error Backpropagation}
\label{multiLayerNormAD}
In this section we apply Normalized Approximate Descent to the optimization problem \eqref{multilayerOptiProb} to derive a spike domain analogue of error backpropagation. First we derive the training algorithm for SNNs with single hidden layer, and then we provide its generalized form to train feedforward SNNs with arbitrary number of hidden layers.

\subsection{NormAD -- Normalized Approximate Descent}
Following the approach introduced in \cite{anwani2015normad}, we use  three steps viz., (i) Stochastic Gradient Descent, (ii) Normalization and (iii) Gradient Approximation, as elaborated below to solve the optimization problem \eqref{multilayerOptiProb}.

\subsubsection{Stochastic Gradient Descent}
Instead of trying to minimize the aggregate cost over the epoch, we try to minimize the instantaneous contribution to the cost at each instant $t$ for which $e(t) \neq 0$, independent of that at any other instant and expect that it minimizes the total cost $J\left(W_{h}, \vec{w}_o \right)$.
The instantaneous contribution to the cost at time $t$ is denoted as $J\left(W_h, \vec{w}_o, t\right)$ and is obtained by restricting the limits of integral in Eq.~\eqref{costMulti} to an infinitesimally small interval around time $t$:
\begin{equation}
  J\left(W_h, \vec{w}_o , t\right)= 
 \begin{cases}
 \frac{1}{2}\left(\Delta V_{d,o}(t)\right)^2   & \ e\left(t\right) \neq 0 \\
 0 & \text{otherwise.}
 \end{cases}	
\end{equation}
Thus, using stochastic gradient descent, the prescribed change in any weight vector $\vec{w}$ at time $t$ is given as:
\begin{align*}
\Delta \vec{w}(t)
&=
\begin{cases}
- k(t) \cdot \nabla_{\vec{w}}J\left(W_h, \vec{w}_o , t\right) & \ e\left(t\right) \neq 0\\
0 & \text{otherwise.}
\end{cases}
\end{align*}
Here $k(t)$ is a time dependent learning rate. The change aggregated over the epoch is, therefore
\begin{align}
\nonumber \Delta \vec{w}
& =\int_{t=0}^{T}  - k(t)\cdot \nabla_{\vec{w}}J\left(W_h, \vec{w}_o , t\right) \cdot |e(t)| dt \nonumber\\
& =\int_{t=0}^{T}  k(t) \cdot \Delta V_{d,o}(t)\cdot \nabla_{\vec{w}}V_o\left(t\right) \cdot |e(t)| dt.
\label{eqMultilayerDelw}
\end{align}
Minimizing the instantaneous cost only for time instants when $e(t)\neq 0$ also renders the weight updates spike-induced i.e., it is non-zero only when there is either an observed or a desired spike in the output neuron.

\subsubsection{Normalization}
Observe that in Eq.~\eqref{eqMultilayerDelw}, the gradient of membrane potential $\nabla_{\vec{w}}V_o(t)$ is scaled with the error term $\Delta V_{d,o}(t)$, which serves two purposes. First, it determines the sign of the weight update at time $t$ and second, it gives more importance to weight updates corresponding to the instants with higher magnitude of error. But $V_{d}(t)$ and hence error $\Delta V_{d,o}(t)$ is not known. Also, dependence of the error on $\vec{w}_{h,i}$ is non-linear, so we eliminate the error term $\Delta V_{d,o}(t)$ for neurons in hidden layer by choosing $k\left(t\right)$ such that
\begin{align}
| k\left(t\right) \cdot \Delta V_{d,o}(t)|=r_h,
\label{eqHiddenkoft}
\end{align}
where $r_h$ is a constant. 
From Eq.~\eqref{eqMultilayerDelw}, we obtain the weight update for the $i^{th}$ neuron in the hidden layer as
\begin{align}
\Delta \vec{w}_{h,i} &= r_h \int_{t=0}^T  \nabla_{\vec{w}_{h,i}}V_o\left(t\right) e\left(t\right) dt,
\label{eqMultilayerNormed}
\end{align}
since $\sgn \left(\Delta V_{d,o}(t)\right) = \sgn \left( e(t)\right)$.
For the output neuron, we eliminate the error term by choosing $k\left(t\right)$ such that
\begin{align*}
\|k(t) \cdot \Delta V_{d,o}(t)\cdot \nabla_{\vec{w}_{o}}V_o\left(t\right)\| = r_{o},
\end{align*}
where $r_o$ is a constant.
From Eq.~\eqref{eqMultilayerDelw}, we get the weight update for the output neuron as
\begin{align}
\Delta \vec{w}_{o} &= r_{o} \int_{t=0}^T  \frac{\nabla_{\vec{w}_{o}}V_o\left(t\right)}{\|\nabla_{\vec{w}_{o}}V_o\left(t\right)\|} e\left(t\right) dt.
\label{eqOutNormed}
\end{align}
Now, we proceed to determine the gradients $\nabla_{\vec{w}_{h,i}}V_o\left(t\right)$ and $\nabla_{\vec{w}_{o}}V_o\left(t\right)$.

\subsubsection{Gradient Approximation}
We use an approximation of $V_o\left(t\right)$ which is affine in $\vec{w}_{o}$  and given as
\begin{align}
\widehat{\vec{d}}_o\left(t\right) &= \vec{c}_{o}\left(t\right)*\widehat{h}\left(t\right) \label{eqApproxVecdo}\\
\Rightarrow V_o\left(t\right)  &\approx  \widehat{V}_o\left(t\right) = E_L + \vec{w}^T_{o} \widehat{\vec{d}}_o\left(t\right),
\label{eqApproxVo}
\end{align}
where $\widehat{h}\left(t\right)= \left(\nicefrac{1}{C_m}\right)\exp\left(-t/\tau_L'\right)u\left(t\right)$ with $\tau_L' \leq \tau_L$. Here, $\tau_L'$ is a hyper-parameter of learning rule that needs to be determined empirically.
Similarly $V_{h,i}\left(t\right)$ can be approximated as
\begin{align}
\widehat{\vec{d}}_{h}\left(t\right) &= \vec{c}_{h}\left(t\right)*\widehat{h}\left(t\right)\\
\Rightarrow V_{h,i}\left(t\right) &\approx \widehat{V}_{h,i}\left(t\right) = E_L + \vec{w}^T_{h,i} \widehat{\vec{d}}_{h}\left(t\right).
\label{eqApproxVhi}
\end{align}
Note that $\widehat{V}_{h,i}\left(t\right)$ and $\widehat{V}_{o}\left(t\right)$ are linear in weight vectors $\vec{w}_{h,i}$ and $\vec{w}_{o}$ respectively of corresponding input synapses. 
From Eq.~\eqref{eqApproxVo}, we approximate $\nabla_{\vec{w}_{o}}V_o\left(t\right)$ as
\begin{align}
\nonumber \nabla_{\vec{w}_{o}}V_o\left(t\right)		&\approx \nabla_{\vec{w}_{o}}\widehat{V}_o\left(t\right) \\						&= \widehat{\vec{d}}_o\left(t\right).
\label{eqGradWoVo}
\end{align}
Similarly $\nabla_{\vec{w}_{h,i}}V_o\left(t\right)$ can be approximated as
\begin{align}
\nonumber \nabla_{\vec{w}_{h,i}}V_o\left(t\right)		&\approx \nabla_{\vec{w}_{h,i}}\widehat{V}_o\left(t\right) \\
											&= w_{o,i} \left(\nabla_{\vec{w}_{h,i}} \widehat{d}_{o,i}\left(t\right)\right),
\end{align}
since only $\widehat{d}_{o,i}\left(t\right)$ depends on $\vec{w}_{h,i}$. Thus, from Eq.~\eqref{eqApproxVecdo}, we get
\begin{align}
\nabla_{\vec{w}_{h,i}}V_o\left(t\right)		&\approx w_{o,i} \left(\nabla_{\vec{w}_{h,i}} c_{o,i}\left(t\right) * \widehat{h}\left(t\right)\right).
\label{eqNablaVo}
\end{align}
We know that $c_{o,i}\left(t\right) = \sum_{s} \alpha \left(t-t_{h,i}^{s} \right)$, where $t_{h,i}^{s}$ denotes the $s^{th}$ spiking instant of $i^{th}$ neuron in the hidden layer. Using the chain rule of differentiation, we get
\begin{align}
\nabla_{\vec{w}_{h,i}}  c_{o,i}\left(t\right) \approx \left( \sum_{s} \delta\left(t-t_{h,i}^s\right) \frac{\widehat{\vec{d}}_{h}\left(t_{h,i}^s\right)}{V_{h,i}'\left(t_{h,i}^s\right)} \right)* \alpha ' \left(t\right).
\label{eqGradcoi}
\end{align}
Refer to the appendix \ref{app_grad_approx} for a detalied derivation of Eq.~\eqref{eqGradcoi}.
Using Eq.~\eqref{eqNablaVo} and \eqref{eqGradcoi}, we obtain an approximation to $\nabla_{\vec{w}_{h,i}}V_o\left(t\right)$ as
\begin{align}
\nabla_{\vec{w}_{h,i}} V_o\left(t\right)    		&\approx w_{o,i} \cdot \left( \sum_{s}\delta\left(t-t_{h,i}^s\right) \frac{\widehat{\vec{d}}_{h}\left(t_{h,i}^s\right)}{V_{h,i}'\left(t_{h,i}^s\right)} \right)* \left(\alpha ' \left(t\right)* \widehat{h}\left(t\right) \right).
\label{eqNablaVo2}
\end{align}

\color{black}
Note that the key enabling idea in the derivation of the above learning rule is the use of the inverse of the time rate of change of the neuronal membrane potential to capture the dependency of its spike time on its membrane potential, as shown in the appendix~\ref{app_grad_approx} in detail.

\color{black}

\subsection{Spatio-Temporal Error Backpropagation}
Incorporating the approximation from Eq.~\eqref{eqGradWoVo} into Eq.~\eqref{eqOutNormed}, we get the weight adaptation rule for $\vec{w}_{o}$ as
\begin{align}
\Delta \vec{w}_{o} = r_{o}\int_{0}^T \frac{ \widehat{\vec{d}}_{o}\left(t\right)}{\|  \widehat{\vec{d}}_{o}\left(t\right)\|} e\left(t\right)dt.
\label{del_wo}
\end{align}
Similarly incorporating the approximation made in Eq.~\eqref{eqNablaVo2} into Eq.~\eqref{eqMultilayerNormed}, we obtain the weight adaptation rule for $\vec{w}_{h,i}$ as
\begin{align}
\Delta \vec{w}_{h,i} &= r_h \cdot w_{o,i} \cdot \int_{t=0}^T \left( \left( \sum_{s} \delta\left(t-t_{h,i}^s\right) \frac{\widehat{\vec{d}}_{h}\left(t_{h,i}^s\right)}{V_{h,i}'\left(t_{h,i}^s\right)} \right)* \alpha ' \left(t\right)* \widehat{h}\left(t\right) \right)  e\left(t\right) dt.
\label{eqBackPdw}
\end{align}
Thus the adaptation rule for the weight matrix $W_{h}$ is given as
\begin{align}
\Delta W_{h} &= r_h \cdot \int_{t=0}^T \left( \left( U_{h}(t) \vec{w}_{o} \widehat{\vec{d}}_{h}^{T}\left(t\right) \right)* \alpha ' \left(t\right)* \widehat{h}\left(t\right) \right)  e\left(t\right) dt,
\label{eqBackPdw1}
\end{align}
where $U_{h}(t)$ is a $m \times m$ diagonal matrix with $i^{th}$ diagonal entry given as
\begin{align}
u_{h,ii}(t) = \frac{ \sum_{s}\delta\left(t-t_{h,i}^s\right)}{V_{h,i}'\left(t\right)}.
\end{align}
\color{\highlightAdd}
Note that Eq.~\eqref{eqBackPdw1} requires $\bigO (mn)$ convolutions to compute $\Delta W_{h}$. Using the identity (derived in appendix \ref{app_conv_proof})
\begin{align}
\int_{t} \left( x\left(t\right) * y\left(t\right) \right) z\left(t\right) dt = \int_{t} \left( z\left(t\right) * y\left(-t\right) \right) x\left(t\right) dt,
\end{align}
\color{black}
equation~\eqref{eqBackPdw1} can be equivalently written in following  form, which lends itself to a more efficient implementation involving only $\bigO (1)$ convolutions.
\begin{align}
\Delta W_{h} &= r_h \cdot \int_{t=0}^T \left( e\left(t\right) * \alpha ' \left(-t\right)* \widehat{h}\left(-t\right) \right) U_{h}(t) \vec{w}_{o} \widehat{\vec{d}}_{h}^{T} \left(t\right) dt.
\label{eqBackPdw2}
\end{align}
\color{\highlightAdd}
Rearranging the terms as follows brings forth the inherent process of \emph{spatio-temporal backpropagation of error} happening during NormAD based training.
\begin{align}
    \Delta W_{h} &= r_h \cdot \int_{t=0}^T U_{h}(t) \left( \left( \vec{w}_{o} e\left(t\right) \right) * \alpha ' \left(-t\right)* \widehat{h}\left(-t\right) \right) \widehat{\vec{d}}_{h}^{T} \left(t\right) dt.
\end{align}
Here spatial backpropagation is done through the weight vector $\vec{w}_{o}$ as 
\begin{equation}
\vec{e}^{spat}_{h}(t) = \vec{w}_{o} e(t)
\end{equation}
and then temporal backpropagation by convolution with time reversed kernels $\alpha ' (t)$ and $\widehat{h} (t)$ and sampling with $U_{h}(t)$ as
\begin{equation}
\vec{e}^{temp}_{h}(t) =
U_{h}(t) \left(\vec{e}^{spat}_{h}(t) * \alpha ' \left(-t\right) * \widehat{h}\left(-t\right)\right).
\end{equation}
\color{black}
\highlightDel{In addition Eq.~\eqref{eqBackPdw2} also brings forth the intuition that NormAD based training of multi-layer SNNs is actually \emph{spatio-temporal error backpropagation}, where spatial backpropagation is done through the weight vector $\vec{w}_{o}$ and temporal backpropagation using time reversed kernels $\alpha ' (t)$ and $\widehat{h} (t)$.}
It will be more evident when we generalize it to SNNs with arbitrarily many hidden layers.

From Eq.~\eqref{eqBackPdw}, note that the weight update for synapses of a neuron in hidden layer depends on its own spiking activity thus suggesting the spike-induced nature of weight update. However, in case all the spikes of the hidden layer vanish in a particular training iteration, there will be no spiking activity in the output layer and as per  Eq.~\eqref{eqBackPdw} the weight update $\Delta \vec{w}_{h,i}=\vec{0}$ for all subsequent iterations. To avoid this, regularization techniques such as constraining the average spike rate of neurons in the hidden layer to a certain range can be used, though it has not been used in the present work.

\begin{figure}[!h]
    \includegraphics[scale=0.42]{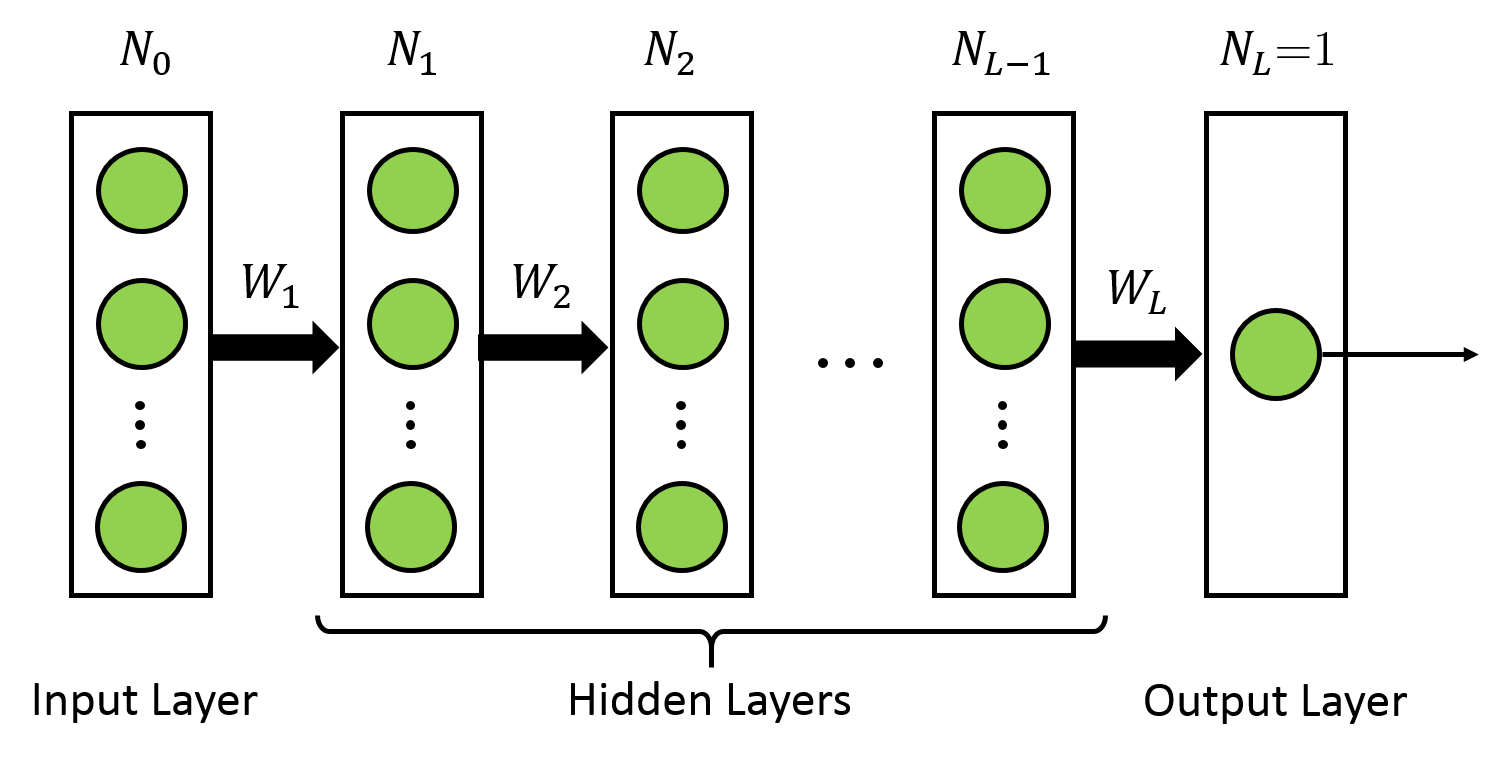}
    \caption{Fully connected feedforward SNN with $L$ layers ($N_{0} \rightarrow N_{1} \rightarrow N_{2} \cdots N_{L-1} \rightarrow 1$)}
    \label{figMultiLayerSNN_cartoon}
\end{figure}

\subsubsection{Generalization to Deep SNNs}
\label{spatTempBackProp}
For the case of feedforward SNNs with two or more hidden layers, the weight update rule for output layer remains the same as in Eq. \eqref{del_wo}.
Here, we provide the general weight update rule for any particular hidden layer of an arbitrary fully connected feedforward SNN $N_{0} \rightarrow N_{1} \rightarrow N_{2} \cdots N_{L-1} \rightarrow 1$ with L layers as shown in Fig. \ref{figMultiLayerSNN_cartoon}.
This can be obtained by the straight-forward extension of the derivation for the case with single hidden layer discussed above.
For this discussion, the subscript \textit{h} or \textit{o} indicating the layer of the corresponding neuron in the previous discussion is replaced by the  layer index to accommodate arbitrary number of layers.
The iterative weight update rule for synapses connecting neurons in layer $l-1$ to neurons in layer $l$ viz., $W_{l}$ $(0 < l < L)$ is given as follows:
\begin{align}
\label{eqSptTmpBackP}
\Delta W_{l} = r_h \int_{t=0}^{T} \vec{e}^{temp}_{l}(t) \vec{\widehat{d}}_{l}^{T}(t) dt \qquad \text{for } 0<l<L,
\end{align}
where
\begin{equation}
\vec{e}^{temp}_{l}(t) =
 \begin{cases}
U_{l}(t) \left(\vec{e}^{spat}_{l}(t) * \alpha ' \left(-t\right) * \widehat{h}\left(-t\right)\right)		&  1<l<L\\
   e(t)		& l = L,
 \end{cases}
 \label{eqTemporalBackProp}
\end{equation}
performs \emph{temporal backpropagation} following the \emph{spatial backpropagation} as
\begin{equation}
\vec{e}^{spat}_{l}(t) = W_{l+1}^{T} \vec{e}^{temp}_{l+1}(t) \qquad \text{for } 1<l<L.
\label{eqSpaialBackProp}
\end{equation}
Here $U_{l}(t)$ is an $N_{l} \times N_{l}$ diagonal matrix with $n^{th}$ diagonal entry given as
\begin{align}
\label{eqInvDiffV}
u_{l,nn}(t) = \frac{ \sum_{s}\delta\left(t-t_{l,n}^s\right)}{V_{l,n}'\left(t\right)},
\end{align}
where $V_{l,n}\left(t\right)$ is the membrane potential of $n^{th}$ neuron in layer $l$ and $t_{l,n}^s$ is the time of its $s^{th}$ spike.
\color{\highlightAdd}
From Eq. \eqref{eqTemporalBackProp}, note that temporal backpropagation through layer $l$ requires $\bigO\left(N_{l}\right)$ convolutions.
\color{black}

\section{Numerical validation}
\label{validate}
In this section we validate the applicability of NormAD based spatio-temporal error backpropagation to the training of multi-layer SNNs. The algorithm comprises of Eq.~\eqref{eqSptTmpBackP} - \eqref{eqInvDiffV}.
%

\subsection{XOR Problem}
\label{XORtraining}
XOR problem is a prominent example of non-linear classification problems  which can not be solved using the single layer neural network architecture and hence compulsorily require a multi-layer network. Here, we present how proposed NormAD based training was employed to solve a spike domain formulation of the XOR problem for a multi-layer SNN. The XOR problem is similar to the one used in \cite{bohte2002error} and represented by Table~\ref{tableXOR}. There are $3$ input neurons and $4$ different input spike patterns given in the $4$ rows of the table, where temporal encoding is used to represent logical $0$ and $1$. The numbers in the table represent the arrival time of spikes at the corresponding neurons. The bias input neuron always spikes at $t=0\,$ms. The other two inputs can have two types of spiking activity viz., presence or absence of a spike at $t=6\,$ms, representing logical $1$ and $0$ respectively. The desired output is coded such that an early spike (at $t=10\,$ms) represents a logical $1$ and a late spike (at $t=16\,$ms) represents a logical $0$. 
\begin{table}[!h]
	\renewcommand{\arraystretch}{1.3}
	\caption{XOR Problem set-up from \cite{bohte2002error}, which uses arrival time of spike to encode logical $0$ and $1$.}
	\label{tableXOR}
	\centering
	\begin{tabular}{c c c | c}
		\hline
		\multicolumn{3}{c|}{Input spike time (ms)} & \multirow{2}{*}{Output }\\
		Bias & Input $1$ & Input $2$ &spike time (ms) \\
		\hline
		0 & - & - & 16\\
		0 & - & 6 & 10\\
		0 & 6 & - & 10\\
		0 & 6 & 6 & 16\\
		\hline
	\end{tabular}
\end{table}

In the network reported in \cite{bohte2002error}, the three input neurons had $16$ synapses with axonal delays of $0,1,2,...,15\,$ms respectively. Instead of having multiple synapses we use a set of $18$ different input neurons for each of the three inputs such that when the first neuron of the set spikes, second one spikes after $1\,$ms, third one after another $1\,$ms and so on. Thus, there are $54$ input neurons comprising of three sets with $18$ neurons in each set. So, a $54 \rightarrow 54 \rightarrow 1$ feedforward SNN is trained to perform the XOR operation in our implementation. Input spike rasters corresponding to the $4$ input patterns are shown in Fig.~\ref{figXorRaster} (left).
\begin{figure}[!h]
	\includegraphics[scale=0.7]{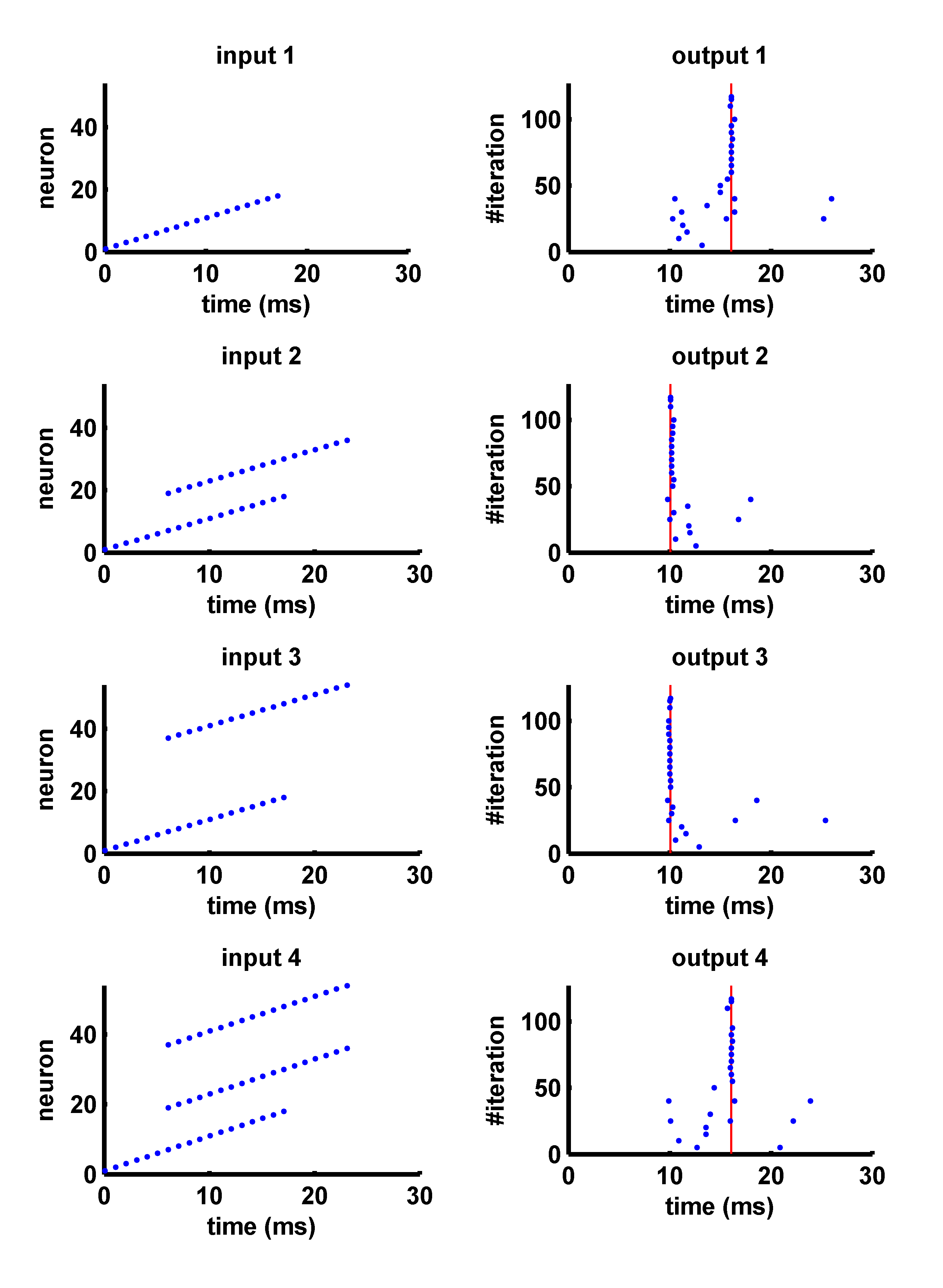}
	\caption{XOR problem: Input spike raster  (left) and corresponding output spike raster (right - blue dots) obtained during NormAD based training of a $54 \rightarrow 54 \rightarrow 1$ SNN with vertical red lines marking the position of desired spikes. The output spike raster is plotted for one in every 5 training iterations for clarity.}
	\label{figXorRaster}
\end{figure}

Weights of synapses from the input layer to the hidden layer were initialized randomly using Gaussian distribution, with $80\%$ of the synapses having positive mean weight (excitatory) and rest $20\%$ of the synapses having negative mean weight (inhibitory). The network was trained using NormAD based  spatio-temporal error backpropagation. Figure~\ref{figXorRaster} plots the output spike raster (on right) corresponding to each of the four input patterns (on left), for an exemplary initialization of the weights from the input to the hidden layer. 
As can be seen, convergence was achieved in less than $120$ training iterations in this experiment.

The necessity of a multi-layer SNN for solving an XOR problem is well known, but to demonstrate the effectiveness of NormAD based training to hidden layers as well, we conducted two experiments. For $100$ independent random initializations of the synaptic weights to the hidden layer, the SNN was trained with (i) non-plastic hidden layer, and  (ii) plastic hidden layer. The output layer was trained using Eq.~\eqref{del_wo} in both the experiments.
Figures~\ref{meanCorrXor} and \ref{stdDevCorrXor} show the mean and standard deviation respectively of spike correlation against training iteration number for the two experiments.
For the case with non-plastic hidden layer, the mean correlation reached close to 1, but the non-zero standard deviation represents a sizable number of experiments which did not converge even after $800$ training iterations.
When the synapses in hidden layer  were also trained,  convergence was obtained for all the $100$ initializations within $400$ training iterations. The convergence criteria used in these experiments was to reach the perfect spike correlation metric of $1.0$.

\begin{figure}[!h]
    \centering
    \begin{subfigure}[b]{\textwidth}
		\centering
        \includegraphics[scale=0.6]{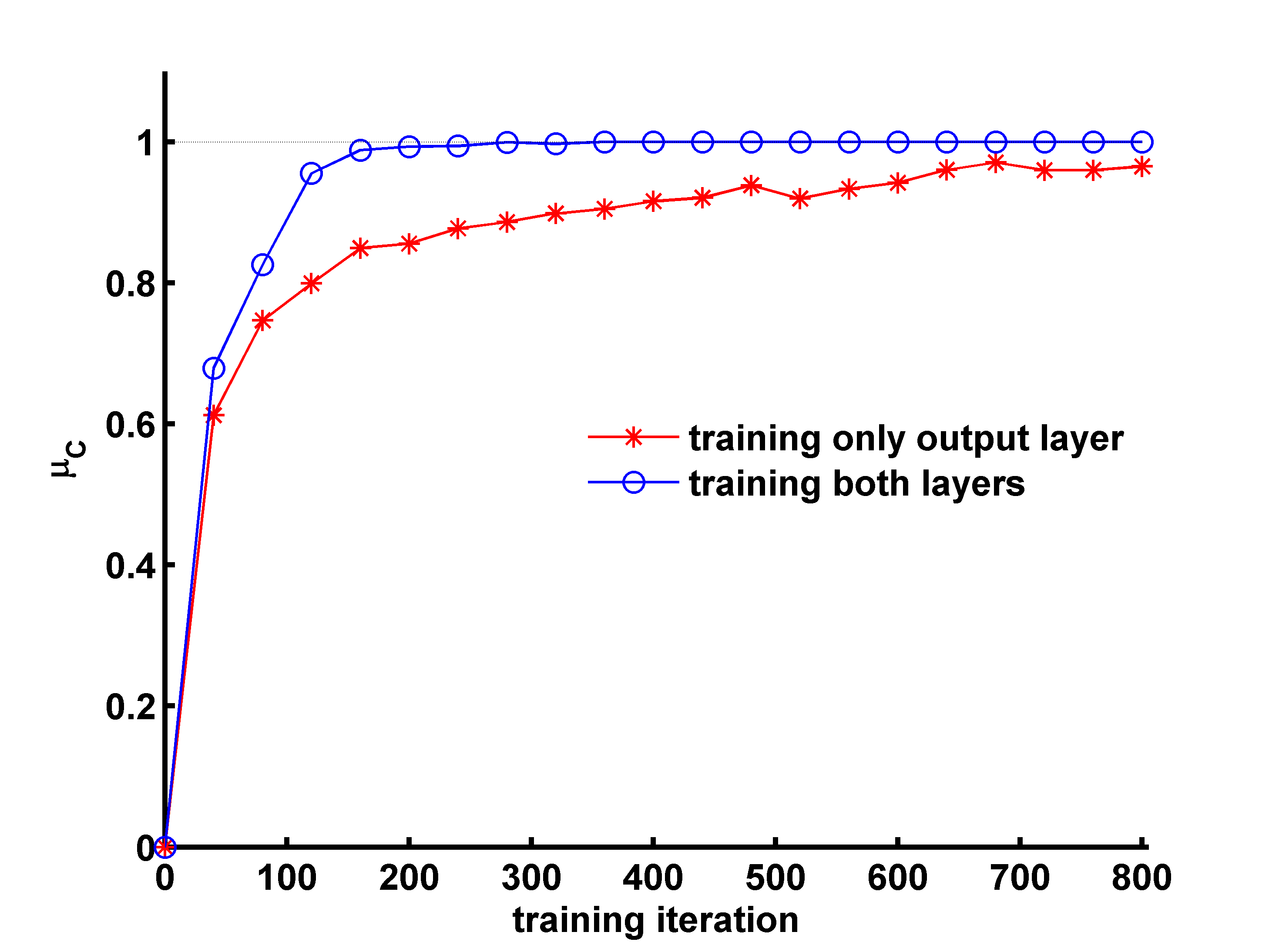}
        \caption{Mean spike correlation \label{meanCorrXor}}
    \end{subfigure}
    \begin{subfigure}[b]{\textwidth}
		\centering
        \includegraphics[scale=0.6]{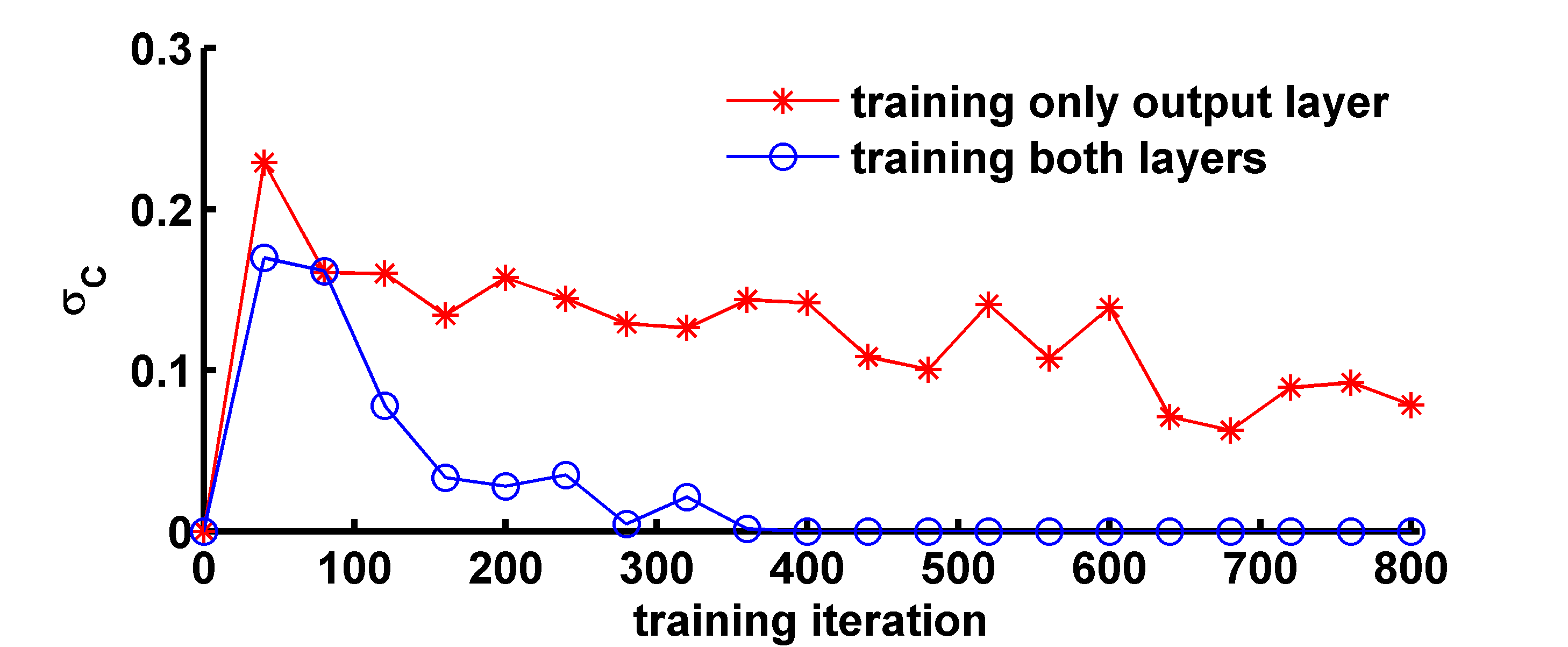}
        \caption{Standard deviation of spike correlation \label{stdDevCorrXor}}
    \end{subfigure}    
	\caption{Plots of \subref{meanCorrXor} mean and \subref{stdDevCorrXor} standard deviation of spike correlation metric over $100$ different initializations of $54 \rightarrow 54 \rightarrow 1$ SNN, trained for the XOR problem with non-plastic hidden layer (red asterisk) and plastic hidden layer (blue circles).}
	\label{figXorCorr}
\end{figure}

\subsection{Training SNNs with 2 Hidden Layers}
\label{multilayerTraining}
Next, to demonstrate spatio-temporal error backpropagation through multiple hidden layers, we applied the algorithm to train $100 \rightarrow 50 \rightarrow 25 \rightarrow 1$ feedforward SNNs for general spike based training problems. The weights of synapses feeding the output layer were initialized to $0$, while synapses feeding the hidden layers were initialized using a uniform random distribution and with $80\%$ of them excitatory and the rest $20\%$ inhibitory. Each training problem comprised of $n=100$ input spike trains and one desired output spike train, all generated to have Poisson distributed spikes with arrival rate $20\,$s$^{-1}$ for inputs and $10\,$s$^{-1}$ for the output, over an epoch duration $T=500\,$ms. Figure~\ref{figMultilayerRaster} shows the progress of training for an exemplary training problem by plotting the output spike rasters for various training iterations overlaid on plots of vertical red lines denoting the positions of desired spikes.

\begin{figure}[!h]
	\includegraphics[scale=0.5]{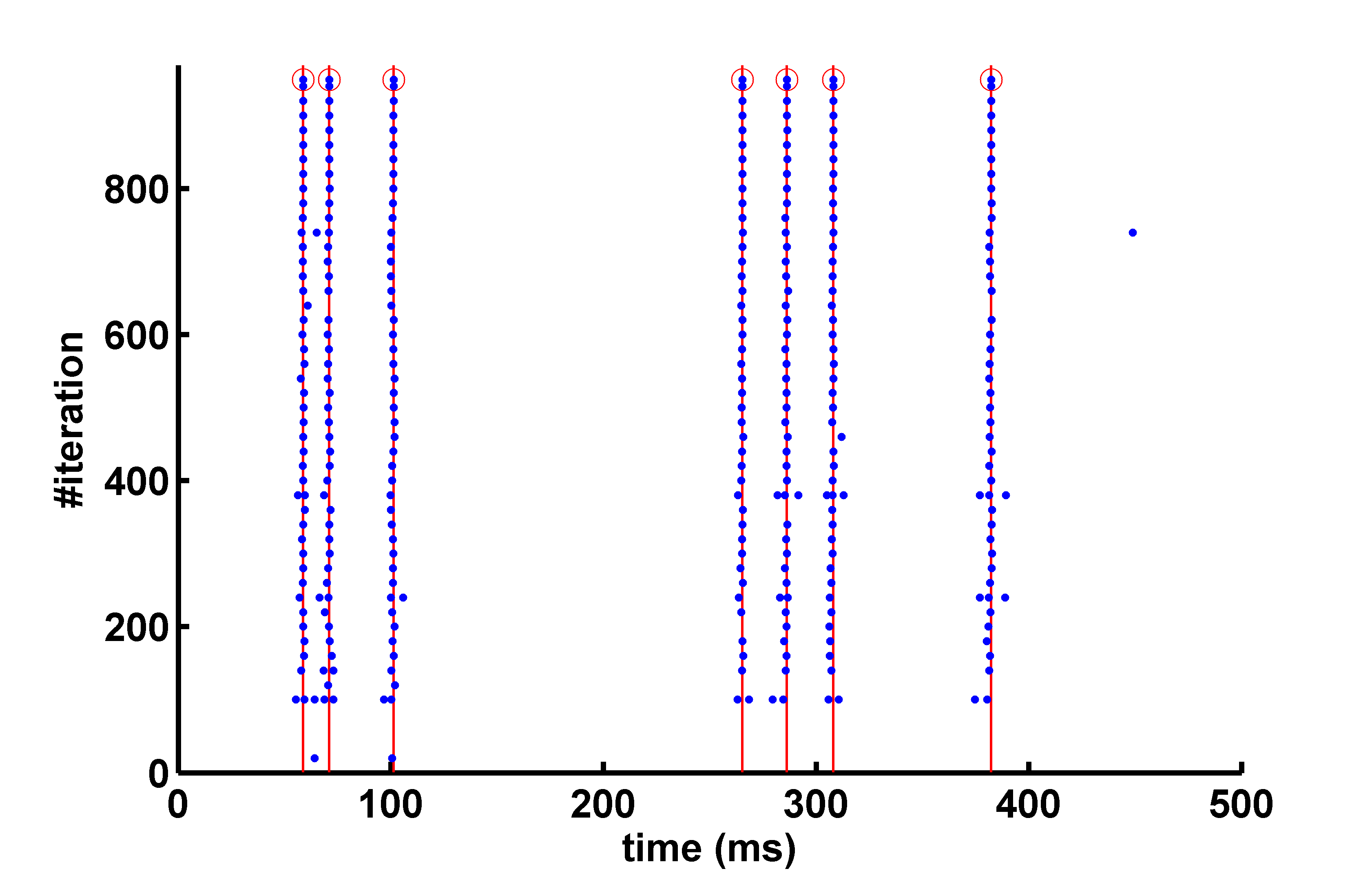}
	\caption{Illustrating NormAD based training of an exemplary problem for 3-layer $100 \rightarrow 50 \rightarrow 25 \rightarrow 1$ SNN. The output spike rasters (blue dots) obtained during one in every 20 training iterations (for clarity) is shown, overlaid on plots of vertical red lines marking positions of the desired spikes.}
	\label{figMultilayerRaster}
\end{figure}

To assess the gain of training hidden layers using NormAD based spatio-temporal error backpropagation, we ran a set of $3$ experiments. For $100$ different training problems for the same SNN architecture as described above, we studied the effect of (i) training only the output layer weights, (ii) 
training only the outer 2 layers and (iii) training all the $3$ layers.
\begin{figure}[!h]
	\centering
	\includegraphics[scale=0.6]{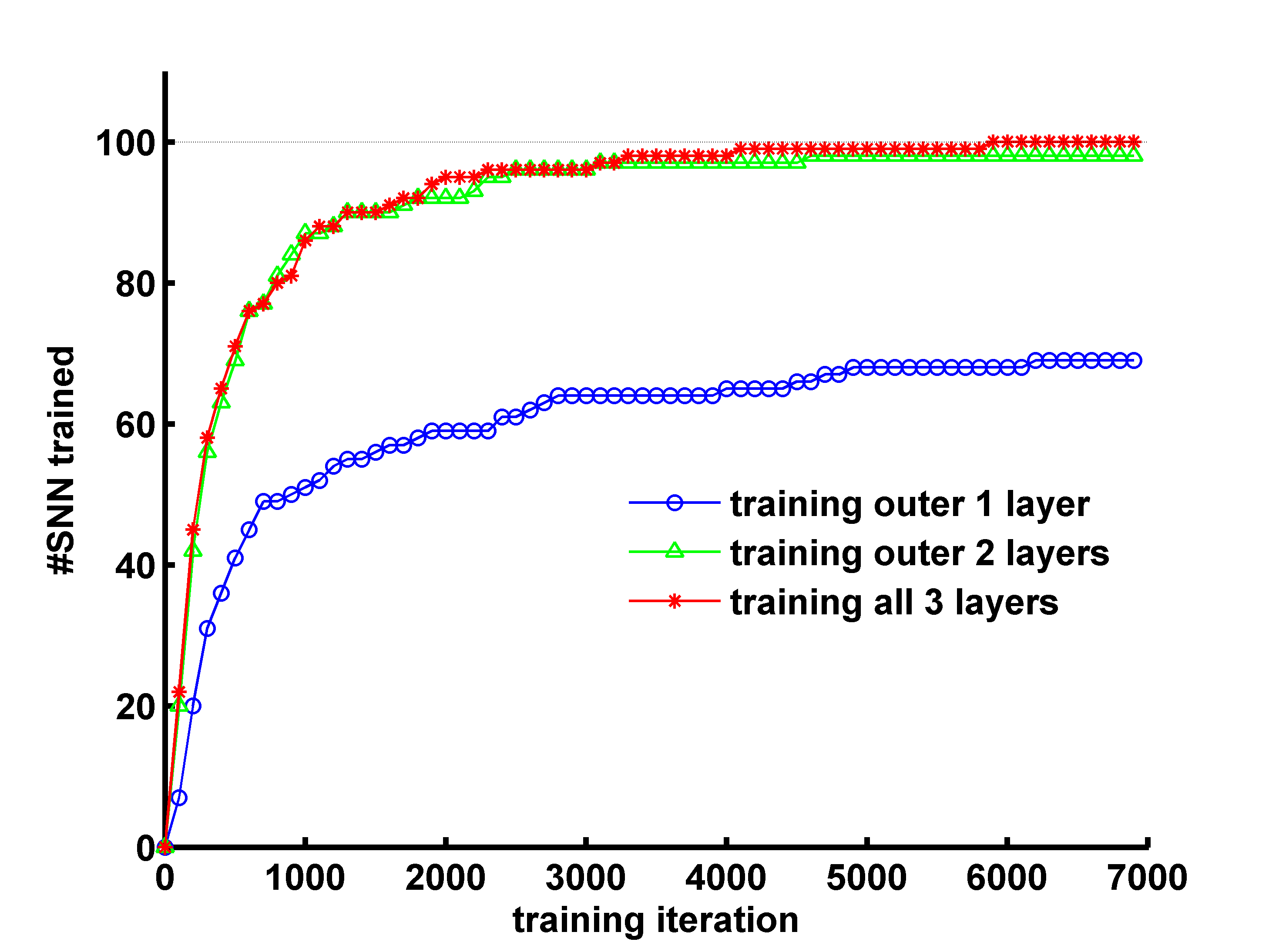}
	\caption{Plots showing cumulative number of training problems for which convergence was achieved out of total $100$ different training problems for 3-layer $100 \rightarrow 50 \rightarrow 25 \rightarrow 1$ SNNs.}
    \label{figHistMultiLayerCumulativeTrained}
\end{figure}
Figure~\ref{figHistMultiLayerCumulativeTrained} plots the cumulative number of SNNs trained against number of training itertions for the $3$ cases, where the criteria for completion of training is reaching the correlation metric of $0.98$ or above.
Figures~\ref{meanCorr} and \ref{stdDevCorr} show plots of mean and standard deviation respectively of spike correlation against training iteration number 
for the $3$ experiments. As can be seen, in the third experiment when all $3$ layers were trained, all $100$ training problems converged within $6000$ training iterations. In contrast, the first $2$ experiments have non-zero standard deviation even until $10000$ training iterations indicating non-convergence for some of the cases. In the first eperiment, where only synapses feeding the output layer were trained, convergence was achieved only for $71$ out of $100$ training problems after $10000$ iterations. However, when the synapses feeding the top two layers or all three layers were trained, the number of cases reaching convergenvce rose to $98$ and $100$ respectively, thus proving the effectiveness of the proposed NormAD based training  method for multi-layer SNNs.

\begin{figure}[!h]
	\centering
    \begin{subfigure}[b]{\textwidth}
		\centering
        \includegraphics[scale=0.6]{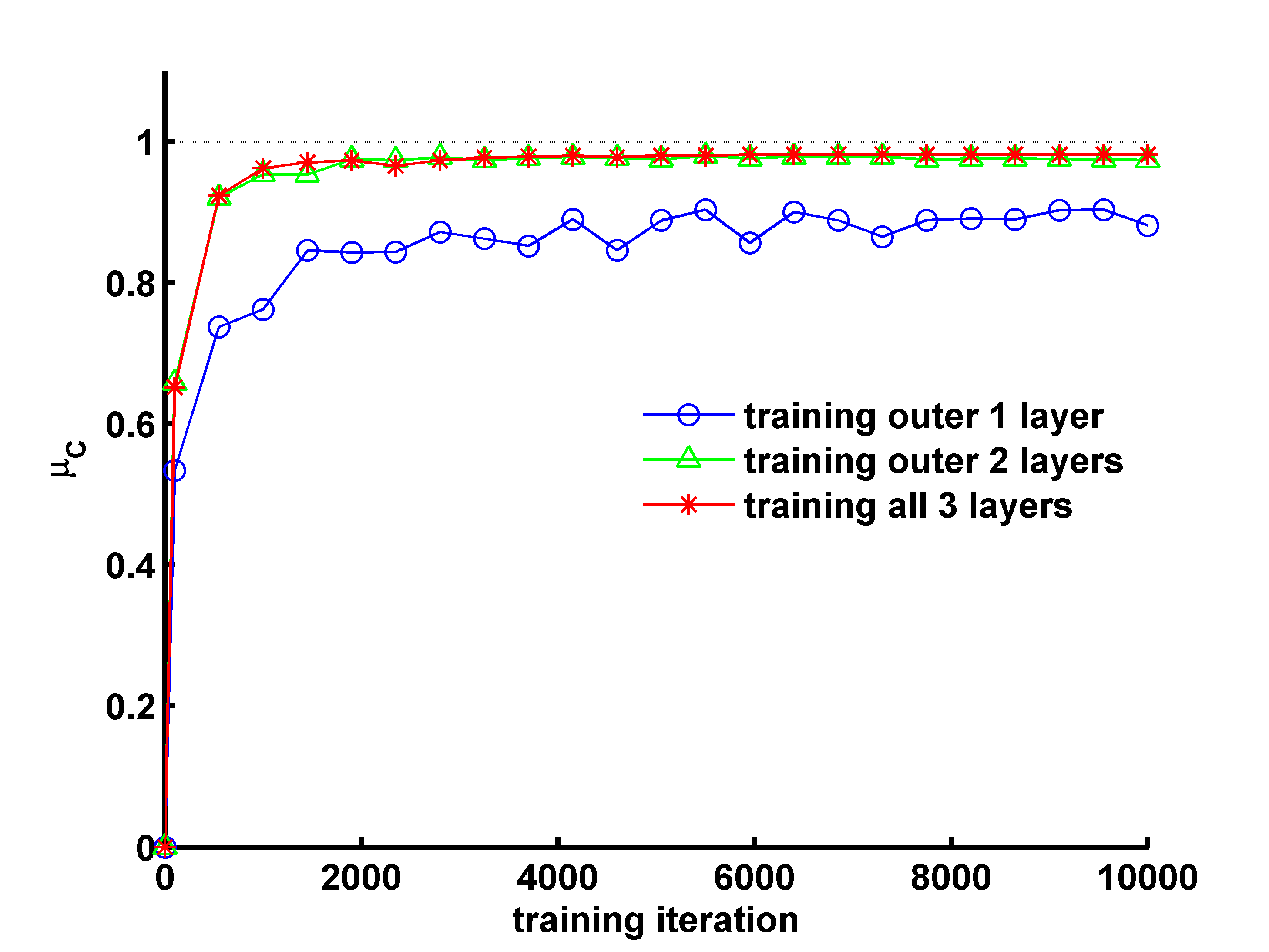}
        \caption{Mean spike correlation \label{meanCorr}}
    \end{subfigure}
    \begin{subfigure}[b]{\textwidth}
		\centering
        \includegraphics[scale=0.6]{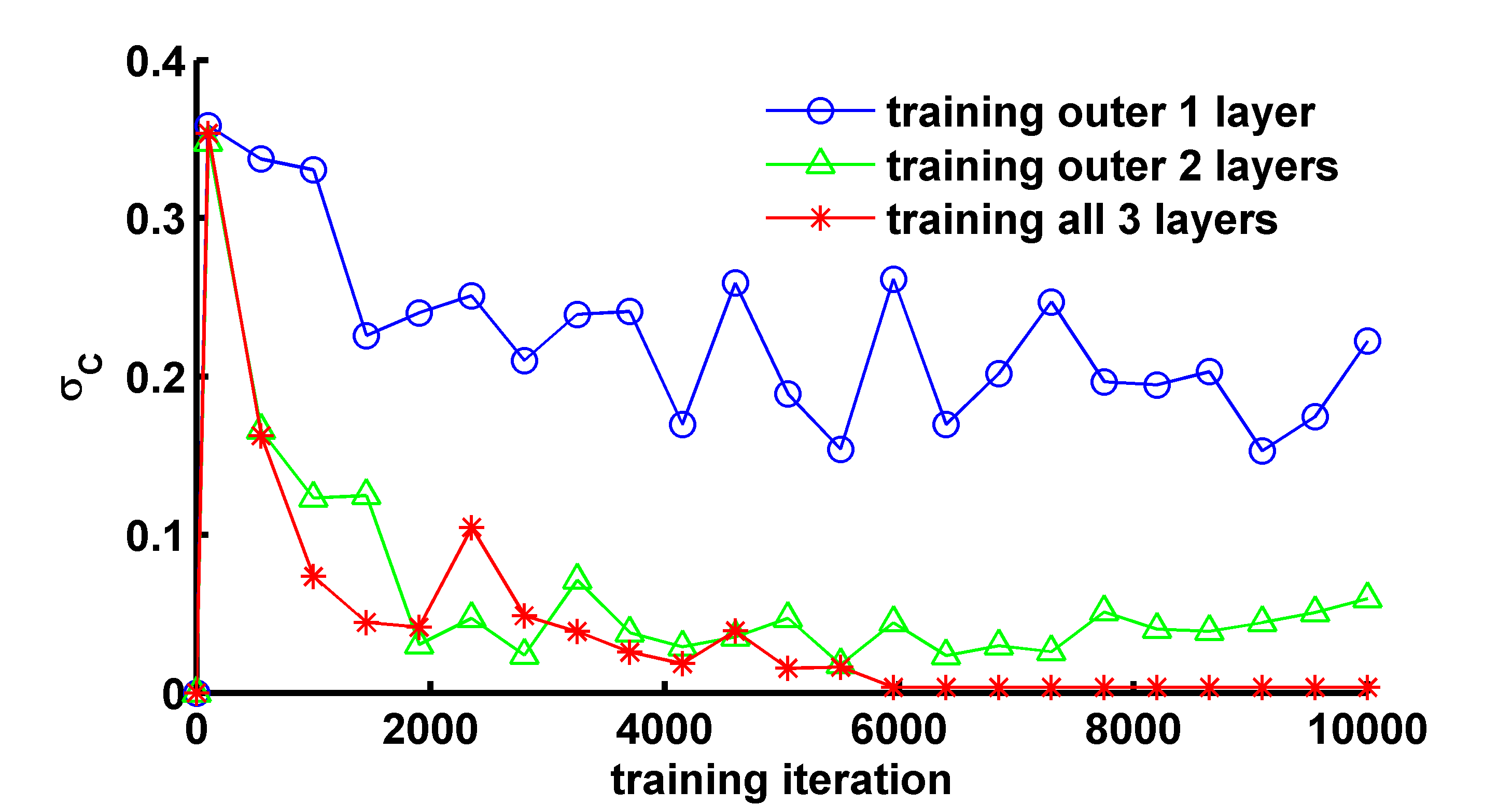}
        \caption{Standard deviation of spike correlation\label{stdDevCorr}}
    \end{subfigure} 
	\caption{Plots of \subref{meanCorr} mean and \subref{stdDevCorr} standard deviation of spike correlation metric while partially or completely training 3-layer $100 \rightarrow 50 \rightarrow 25 \rightarrow 1$ SNNs for $100$ different training problems.}
	\label{figCorrMultiLayerCmp1}
\end{figure}

\section{Conclusion}
\label{secConclusion}
We developed NormAD based spaio-temporal error backpropagation to train multi-layer feedforward spiking neural networks. It is the spike domain analogue of error backpropagation algorithm used in second generation neural networks. The derivation was accomplished by first formulating the corresponding training problem as a non-convex optimization problem and then employing Normalized Approximate Descent based optimization to obtain the weight adaptation rule for the SNN.
The learning rule was validated by applying it to train $2$ and $3$-layer feedforward SNNs for a spike domain formulation of the XOR problem and general spike domain training problems respectively. 

The main contribution of this work is hence the development of a learning rule for spiking neural networks with arbitrary number of  hidden layers. One of the major hurdles in achieving this has been the problem of backpropagating errors through non-linear leaky integrate-and-fire dynamics of a spiking neuron. We have tackled this by introducing temporal error backpropagation and quantifying the dependence of the time of a spike on the corresponding membrane potential by the inverse temporal rate of change of the membrane potential. This together with the spatial backpropagation of errors constitutes NormAD based training of multi-layer SNNs.
\color{black}

The  problem of local convergence while training second generation deep neural networks is tackled by unsupervised \emph{pretraining} prior to the application of error backpropagation \cite{hinton2006fast,Erhan:2010:WUP:1756006.1756025}. Development of such unsupervised pretraining techniques for deep SNNs is a topic of future research, as NormAD could be applied in principle to develop SNN based autoencoders.
 
\appendix
\numberwithin{equation}{section}
\appendixpage
\section{Gradient Approximation}
\label{app_grad_approx}
 
\color{black}
Derivation of Eq.~\ref{eqGradcoi} is presented below:
\begin{align}
\nabla_{\vec{w}_{h,i}}  c_{o,i}\left(t\right) 
\nonumber						   &= \sum_{s} \frac{\partial \alpha \left(t-t_{h,i}^s \right)}{\partial t_{h,i}^s} \cdot \nabla_{\vec{w}_{h,i}} t_{h,i}^s 	\qquad (\text{from Eq. \eqref{eq_c_oi_of_t}})\\
								   &= \sum_{s} - \alpha ' \left(t-t_{h,i}^s \right) \cdot \nabla_{\vec{w}_{h,i}} t_{h,i}^s
\label{eqGradcoiOrigin}
\end{align}
To compute $\nabla_{\vec{w}_{h,i}} t_{h,i}^s$, let us assume that a small change $\delta w_{h,ij}$ in $w_{h,ij}$ led to changes in $V_{h,i}(t)$ and  $t_{h,i}^s$ by $\delta V_{h,i}(t)$ and  $\delta t_{h,i}^s$ respectively i.e.,
\begin{align}
V_{h,i}(t_{h,i}^s + \delta t_{h,i}^s) + \delta V_{h,i}(t_{h,i}^s + \delta t_{h,i}^s) = V_{T}.
\label{eq_delta_spike}
\end{align}
From Eq. \eqref{eqApproxVhi}, $\delta V_{h,i}(t)$ can be approximated as
\begin{align}
\delta V_{h,i}(t) \approx \delta w_{h,ij} \cdot \widehat{d}_{h,j}(t),
\end{align}
hence from Eq. \eqref{eq_delta_spike} above
\begin{align}
\nonumber V_{h,i}(t_{h,i}^s) + \delta t_{h,i}^s V_{h,i}'(t_{h,i}^s) + \delta w_{h,ij} \cdot \widehat{d}_{h,j}(t_{h,i}^s + \delta t_{h,i}^s) &\approx V_{T}
\end{align}
\begin{align}
& \nonumber \implies \frac{\delta t_{h,i}^s}{\delta w_{h,ij}} \approx \frac{-\widehat{d}_{h,j}(t_{h,i}^s + \delta t_{h,i}^s)}{V_{h,i}'(t_{h,i}^s)} \qquad (\text{since } V_{h,i}(t_{h,i}^s) = V_{T}
)\\
& \nonumber \implies \frac{\partial t_{h,i}^s}{\partial w_{h,ij}} \approx \frac{-\widehat{d}_{h,j}(t_{h,i}^s)}{V_{h,i}'(t_{h,i}^s)}\\
& \implies \nabla_{\vec{w}_{h,i}} t_{h,i}^s \approx \frac{-\widehat{\vec{d}}_{h}\left(t_{h,i}^s\right)}{V_{h,i}'\left(t_{h,i}^s\right)}.
\label{eq_grad_sp_time}
\end{align}
Thus using Eq. \eqref{eq_grad_sp_time} in Eq. \eqref{eqGradcoiOrigin} we get
\begin{align}
\nabla_{\vec{w}_{h,i}}  c_{o,i}\left(t\right)
\nonumber					&\approx \sum_{s} \alpha ' \left(t-t_{h,i}^s \right)\frac{\widehat{\vec{d}}_{h}\left(t_{h,i}^s\right)}{V_{h,i}'\left(t_{h,i}^s\right)}\\
							&\approx \left( \sum_{s} \delta\left(t-t_{h,i}^s\right) \frac{\widehat{\vec{d}}_{h}\left(t_{h,i}^s\right)}{V_{h,i}'\left(t_{h,i}^s\right)} \right)* \alpha ' \left(t\right).
\label{eqGradcoiAppendix}
\end{align}

Note that approximation in Eq.~\eqref{eq_grad_sp_time} is an important step towards obtaining weight adaptation rule for hidden layers, as it now allows us to approximately model the dependence of the spiking instant of a neuron on its inputs using the inverse of the time derivative of its membrane potential.

\color{\highlightAdd}
\section{}
\label{app_conv_proof}
\begin{lemma}
Given 3 functions $x(t)$, $y(t)$ and $z(t)$
\begin{align*}
\int_{t} \left( x\left(t\right) * y\left(t\right) \right) z\left(t\right) dt = \int_{t} \left( z\left(t\right) * y\left(-t\right) \right) x\left(t\right) dt.
\end{align*}
\end{lemma}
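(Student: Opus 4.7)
The plan is to prove the identity by unfolding both sides using the definition of convolution, applying Fubini's theorem to swap the order of integration, and then matching the two resulting double integrals via a single change of variables. Throughout, I will assume the functions are sufficiently well-behaved (e.g. integrable or compactly supported, as is the case for the spike-related signals in this paper) so that Fubini applies and all integrals converge absolutely.

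First I would rewrite the left-hand side as
\begin{equation*}
\int_{t} (x * y)(t)\, z(t)\, dt = \int_{t} \left(\int_{\tau} x(\tau)\, y(t-\tau)\, d\tau\right) z(t)\, dt,
\end{equation*}
and invoke Fubini to move $x(\tau)$ outside, leaving an inner integral $\int_t y(t-\tau)\, z(t)\, dt$. Performing the substitution $u = t - \tau$ (so $t = u + \tau$) converts this inner integral into $\int_u y(u)\, z(u+\tau)\, du$, and I am left with a double integral in $\tau$ and $u$.

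Next I would do the analogous unfolding on the right-hand side. Writing out the convolution $z * y(-\cdot)$ gives
\begin{equation*}
(z * y(-\cdot))(t) = \int_{\sigma} z(\sigma)\, y\bigl(-(t-\sigma)\bigr)\, d\sigma = \int_{\sigma} z(\sigma)\, y(\sigma - t)\, d\sigma,
\end{equation*}
and after applying Fubini, the substitution $u = \sigma - t$ (so $\sigma = u + t$) turns the expression into $\int_t x(t) \int_u z(u+t)\, y(u)\, du\, dt$. Relabeling the outer variable $t \mapsto \tau$, this is literally the same double integral obtained from the left-hand side, which closes the argument.

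Conceptually, the cleaner framing is that convolution with $y$ and convolution with the time-reversed kernel $y(-\cdot)$ are adjoints with respect to the $L^2$ inner product, so the identity is just $\langle x * y,\, z\rangle = \langle x,\, z * y(-\cdot)\rangle$; I would probably mention this framing as an aside. The main (and only) obstacle is keeping the sign of the argument of $y$ straight through the two substitutions — it is very easy to drop a minus sign when rewriting $y(-(t-\sigma))$ — so I would double-check by explicitly verifying that both double integrals reduce to $\iint x(\tau)\, y(u)\, z(u+\tau)\, du\, d\tau$.
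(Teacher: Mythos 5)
Your proof is correct and uses essentially the same argument as the paper: unfold the convolution, apply Fubini, and match terms (the paper simply recognizes the inner integral $\int_{t} y(t-u)\,z(t)\,dt$ directly as $\left(z * y(-\cdot)\right)(u)$ rather than reducing both sides to a common double integral). The adjointness remark is a nice framing but the substance is identical.
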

\begin{proof}
By definition of linear convolution
\begin{align*}
\int_{t} \left( x\left(t\right) * y\left(t\right) \right) z\left(t\right) dt 
&= \int_{t} \left( \int_{u} x\left(u\right) y\left(t - u\right) du \right) z\left(t\right) dt.
\end{align*}
Changing the order of integration, we get
\begin{align*}
\int_{t} \left( x\left(t\right) * y\left(t\right) \right) z\left(t\right) dt
&= \int_{u} x\left(u\right) \left( \int_{t}  y\left(t - u\right) z\left(t\right) dt \right)  du\\
&= \int_{u} x\left(u\right) \left( y\left(-u\right) * z\left(u\right) \right)  du\\
&= \int_{t} \left( z\left(t\right) * y\left(-t\right) \right) x\left(t\right)  dt.
\end{align*}
\end{proof}

\color{black}

{
\section*{Acknowledgment}
This research was  supported in part  by the U.S. National Science Foundation through the grant 1710009. 

The authors acknowledge the invaluable insights gained during their stay at Indian Institute of Technology, Bombay where the initial part of this work was conceived and conducted as a part of a master's thesis project.
We also acknowledge the reviewer comments which helped us expand the scope of this work and bring it to its present form.}

\end{document}